\documentclass[12pt,draftclsnofoot,onecolumn]{IEEEtran}
%
\pdfoutput=1
\usepackage{amsmath}
\usepackage[backref=page, hidelinks]{hyperref}
\usepackage{graphicx}
%
\usepackage{color}

\usepackage{cite}
\usepackage{amsthm}
\usepackage{amssymb,amsfonts}
\usepackage{algorithmic}
\usepackage{textcomp}
\usepackage{float}
\usepackage[mathscr]{eucal}
\usepackage{ifpdf}
\usepackage{units}
\usepackage{setspace}
\usepackage{algorithm}
\usepackage{array}
\usepackage{xcolor}
\usepackage{multirow}
\usepackage{enumerate}
\usepackage{mathtools}
\usepackage{footnote}
\newtheorem{theorem}{Theorem}
\newtheorem{lemma}{Lemma}

\newtheorem{corollary}{Corollary}
\usepackage{graphics}
\usepackage{bbm}

\usepackage{tikz}
\usetikzlibrary{arrows,calc}

\usepackage{scalerel}
\usepackage{subcaption}
\usepackage{wrapfig,booktabs}
%
%



%
\allowdisplaybreaks
\begin{document}
\title{Non-stationary Delayed Combinatorial Semi-Bandit with Causally Related Rewards
}
%
%
\author{
\IEEEauthorblockN{Saeed Ghoorchian and Setareh Maghsudi}
\thanks{The authors are with the Faculty of Mathematics and Natural Sciences, T{\"u}bingen University, 72074 T{\"u}bingen, Germany. S. M. is also with the Fraunhofer Heinrich Herz Institute, Berlin, Germany. 
E-mail: saeed.ghoorchian@uni-tuebingen.de, setareh.maghsudi@uni-tuebingen.de
}
}
%
%
%
\maketitle 
%
\begin{abstract}
 Sequential decision-making under uncertainty is often associated with long feedback delays. Such delays degrade the performance of the learning agent in identifying a subset of arms with the optimal collective reward in the long run. This problem becomes significantly challenging in a non-stationary environment with structural dependencies amongst the reward distributions associated with the arms. Therefore, besides adapting to delays and environmental changes, learning the causal relations alleviates the adverse effects of feedback delay on the decision-making process. We formalize the described setting as a non-stationary and delayed combinatorial semi-bandit problem with causally related rewards. We model the causal relations by a directed graph in a stationary structural equation model. The agent maximizes the long-term average payoff, defined as a linear function of the base arms' rewards. We develop a policy that learns the structural dependencies from delayed feedback and utilizes that to optimize the decision-making while adapting to drifts. We prove a regret bound for the performance of the proposed algorithm. Besides, we evaluate our method via numerical analysis using synthetic and real-world datasets to detect the regions that contribute the most to the spread of Covid-19 in Italy.
\end{abstract}
{\em Keywords:}
Combinatorial multi-armed bandit, delayed feedback, non-stationary environment, uncertainty.
\section{Introduction}
\label{sec:Intro}
Optimizing the long-run accumulated payoffs is the core challenge of online decision-making. In real-world scenarios, the learner often receives feedback with long delays and performs the learning task in a frequently-varying environment. For example, researchers have recently attempted to use the collected data to analyze the Covid-19 spread within a country \cite{Mastakouri20:CAC, Bastani21:EAT, Nourani22:LCS}. In this example, the testing results become available only after a while, thereby delaying the received information. Moreover, the average number of individuals infected within a region changes over time due to several factors, such as that region's geographical- and demographical characteristics. Such changes render the spread pattern of Covid-19 disease difficult to understand. This problem becomes aggravated when considering mobility amongst different regions. Such mobility results in causal relations amongst the total daily new cases of regions which in turn affects the trend of daily infected cases of each region.  

The challenges mentioned above call for a suitable framework to efficiently model and solve the problem. We take advantage of the Multi-Armed Bandit (MAB) problem \cite{robbins1952some}, where an agent sequentially chooses an arm and the environment reveals feedback drawn from some unknown distribution. The agent's goal is to maximize the cumulative reward over a finite time horizon. Alternatively, the objective is to minimize long-term regret, which is the difference between the accumulated reward of the optimal policy in hindsight and that of the agent's decision-making policy. In this scenario, the agent experiences the exploration-exploitation dilemma, where the decision has to be made between exploring options to acquire new knowledge and selecting an option by exploiting the existing knowledge \cite{Maghsudi16:IWC}. Our model is related to combinatorial semi-bandit \cite{chen2013combinatorial} where the agent is allowed to select a super arm, i.e., a subset of base arms, at each round of decision-making. In this setting, the agent observes a base arm's reward if it belongs to the selected super arm. Consequently, the agent accumulates the collective reward associated with the selected super arm.

We model the described problem using the combinatorial bandit setting and introduce the non-stationary delayed combinatorial semi-bandit problem with causally related rewards, which we refer to as NDC bandit for short. In this problem, we use Structural Equation Models (SEMs) \cite{kaplan2008structural} to model the causal relations. The underlying causal structure that affects the rewards is unknown to the agent. The nodal observation in the graph signal consists of the instantaneous reward of the corresponding base arm and an additional term resulting from the causal influences of other base arms' rewards. In our framework, the agent aims to maximize the long-term average payoff, defined as a linear function of the base arms' rewards and dependent on the network topology.

We propose and analyze an algorithm to solve the NDC bandit problem. Our proposed policy consists of two learning phases at each round of decision-making; first, the agent determines the causal relations by learning the network's topology while taking into account the delayed feedback. Second, the agent exploits the learned graph to improve the decision-making process while coping with abrupt changes in the environment. To this end, it utilizes a discount factor to reduce the influence of past observations with time. We prove a regret bound for the performance of our algorithm. The numerical results on synthetic data demonstrate our algorithm's superiority over several benchmarks. In addition to our experiments with synthetic data, we apply our method to analyze the development of Covid-19 in Italy. We employ our method to detect the regions that contribute the most to the spread of Covid-19 in the country while assuming that the testing results are delayed, and the environment is non-stationary.
\subsection{Related Works}
\label{subsec:Intro}
Most real-world problems are non-stationary in their nature. Bandit-based algorithms developed for non-stationary online learning problems, such as \cite{Garivier11:OUC, Xu20:CBB, Hariri15:ATU, Russac19:WLB, Zeng16:OCA, Ghoorchian21:MAB, Chen20:CSB}, inherently rely on the availability of recent feedback without delay. However, learners in many real-world problems are often limited in accessing such immediate feedback; such limitation arises due to a delay in receiving feedback, which badly affects the performance of the aforementioned methods. In addition to the delay, having causal dependencies in the system \cite{lattimore2016causal, Nourani22:LCS} makes it hard to adapt to environmental changes using the above-mentioned algorithms.

Online learning with delayed feedback has been investigated both in the full feedback setting \cite{Joulani13:OLU, Agarwal11:DDS} and partial feedback setting \cite{Travis15:TQM, Cesa-bianchi16:DAC}. The proposed algorithms only start learning after having received enough feedback from the environment. Consequently, such methods are effective in stationary environments. However, in a non-stationary environment where system parameters undergo abrupt changes, the aforementioned methods are not appropriate anymore. In the worst-case scenario, if the environment changes in the number of rounds less than or equal to the length of feedback delay, it is not possible to perform the learning task, as, by the time the learner receives the information, it loses its value. To address this problem, the authors in \cite{Vernade20:NSD} disentangle the effects of delays and non-stationarity by introducing intermediate signals that become available to the learner without delay. In the proposed method, the authors assume that, given the intermediate signals, the system's long-term behavior is stationary. However, the authors do not consider the possible causal dependencies amongst the arms' reward distributions.

The combinatorial bandit problem is well-investigated in the literature \cite{Chen13:CMA, Chen20:CSB, huyuk2019analysis, chen2016combinatorial, tang2017networked, yu2020graphical}. For example, \cite{huyuk2019analysis} considers a combinatorial semi-bandit problem with probabilistically triggered arms, where selected super arms can probabilistically trigger other base arms. The authors propose the combinatorial Thompson sampling algorithm to solve the problem. At each decision-making time, the algorithm uses the entire collected feedback up to the current time and an oracle to select the best combinatorial action. Similarly, \cite{chen2016combinatorial} studies the combinatorial semi-bandit problem with probabilistically triggered arms and propose an Upper Confidence Bound (UCB)-based algorithm. The proposed algorithm uses an oracle to select a super arm at each time by using the entire observed data up to the current time. In \cite{tang2017networked}, the authors consider a combinatorial setting where at each round of play, the agent receives the reward of the selected super arm and some side rewards from the selected base arms' neighbors. The proposed method exploits the prior knowledge of statistical structures to learn the best combinatorial strategy. Compared to the aforementioned works, our proposed algorithm can work with delayed feedback and adapts to changes in the environment. Moreover, it learns the underlying causal structure over time and exploits it to improve the decision-making process. Hence, in our proposed framework, we do not require prior knowledge of the structural dependencies.

The remaining literature that studies the underlying structure of the problem is not suitable to deal with delayed feedback in changing environments. For example, the authors in \cite{toni2018spectral} attempt to learn the structure of a combinatorial bandit problem with i.i.d. rewards. In the considered setting, there is neither a delay in receiving feedback nor causal relations between rewards. Moreover, \cite{sen2017identifying} employs the MAB framework to identify the best soft intervention on a causal system, while it is assumed that the causal graph is only partially unknown. The authors assume a stationary environment and do not consider possible delays in receiving feedback. Our work is most closely related to \cite{Nourani22:LCS}, where the authors model the causal relations by a directed graph in a stationary SEM. However, the proposed framework ignores the changes in the environment and is not able to work with delayed feedback.

%
The rest of the paper is organized as follows. We formulate the NDC bandit problem in Section \ref{sec:ProFor}. In Section \ref{sec:Strategy}, we propose our algorithm, namely NDC-SEM, and theoretically analyze its regret performance in Section \ref{sec:RegAnalysis}. In Section \ref{sec:NumAnalysis}, we present the results of numerical analysis. Section \ref{sec:Conclusion} concludes the paper.
\section{Problem Formulation}
\label{sec:ProFor}
We consider a causally structured combinatorial semi-bandit problem with $N$ \textit{base arms} gathered in the set $[N] = \{1, 2, \dots, N\}$. Let $\mathbf{b}_{t} = [\mathbf{b}_{t}[1], \mathbf{b}_{t}[2], \dots, \mathbf{b}_{t}[N]] \in [0,1]^{N}$ represent the vector of \textit{instantaneous rewards} of the base arms at time $t$. Moreover, by $\boldsymbol{\beta}_{t} = [\boldsymbol{\beta}_{t}[1], \boldsymbol{\beta}_{t}[2], \dots, \boldsymbol{\beta}_{t}[N]]$, we denote the expected instantaneous reward vector of the base arms at time $t$. For each base arm $i \in [N]$, the instantaneous rewards $\mathbf{b}_{t}[i]$ over time are independent random variables, drawn from an unknown probability distribution with mean $\boldsymbol{\beta}_{t}[i]$.  

We model the causal relationships in the system by using an unknown stationary sparse Directed Acyclic Graph (DAG) $\mathcal{G} = (\mathcal{V}, \mathcal{E}, \mathbf{A})$. $\mathcal{V}$ denotes the set of $N$ vertices, i.e., $\left | \mathcal{V} \right | = N$, $\mathcal{E}$ represents the edge set, and $\mathbf{A}$ is the weighted adjacency matrix. Moreover, we use $p \leq N - 1$ to denote the length of the longest path in the graph $\mathcal{G}$. The reward generating processes in the bandit setting follow an error-free Structural Equation Model (SEM) \cite{giannakis2018topology, Bazerque13:IOS}. At each time $t$, we use $\mathbf{z}_{t} = [\mathbf{z}_{t}[1], \mathbf{z}_{t}[2], \dots, \mathbf{z}_{t}[N]]$ and $\mathbf{y}_{t} = [\mathbf{y}_{t}[1], \mathbf{y}_{t}[2], \dots, \mathbf{y}_{t}[N]]$ to denote the exogenous input vector and the endogenous output vector of the SEM, respectively. We refer to $\mathbf{z}_{t}$ and $\mathbf{y}_{t}$ as the \textit{feedback} from the environment at time $t$.

\textbf{Game Protocol:}
At each time $t$, the sequence of the events in the NDC bandit problem is as follows:
%
    (i) The agent determines a \textit{super arm}, i.e., a subset of base arms, by choosing a \textit{decision vector} $\mathbf{x}_{t} = [\mathbf{x}_{t}[1], \mathbf{x}_{t}[2], \dots, \mathbf{x}_{t}[N]] \in \left \{ 0,1 \right \}^{N}$, where $\mathbf{x}_{t}[i] = 1$ if the agent selects the base arm $i$ and $\mathbf{x}_{t}[i] = 0$ otherwise. At each time of play, the agent selects at most $s$ base arms, where the sparsity parameter $s$ is pre-determined and known.
    %
    (ii) After a possibly random delay $D_{t}$, the environment reveals the feedback $\mathbf{z}_{t}$ and $\mathbf{y}_{t}$ to the agent. For simplicity, we assume throughout the paper that the delays are constant, i.e., $\forall t$, $D_{t} = D$; our results can be extended to random delays.
%
The environment presumably changes over time. To model the non-stationarity in the environment, we assume that there exist $\Upsilon_{T}$ time instants before a time horizon $T$ where at least one of the expected rewards $\boldsymbol{\beta}_{t}[i]$, for any $i \in [N]$, changes abruptly.

In \textbf{Fig. \ref{fig:ExmpModel}}, we depict an exemplary graph with four nodes and the underlying causal relations. Note that there does not exist necessarily a causal relation between every pair of nodes. Based on our proposed model, at each time $t$, the agent observes both the exogenous input vector $\mathbf{z}_{t-D}$ and the endogenous output vector $\mathbf{y}_{t-D}$ for the time $t-D$.

\textbf{Expected Regret:}
We define the exogenous input $\mathbf{z}_{t}$ at time $t$ as
\begin{align}
\label{eq:ExoInput}
    \mathbf{z}_{t} = \operatorname{diag}(\mathbf{b}_{t}) \mathbf{x}_{t},
\end{align}
where $\operatorname{diag}(\cdot)$ represents the operator that diagonalizes its given input vector. The exogenous input $\mathbf{z}_{t}$ represents the semi-bandit feedback at time $t$ of the decision-making problem. Accordingly, for each $i \in [N]$, we define the endogenous output $\mathbf{y}_{t}[i]$ as
\begin{equation}
\label{eq:OverallRew}
\mathbf{y}_{t}[i] = \sum_{i \neq j} \mathbf{A}[i,j] \mathbf{y}_{t}[j] + \mathbf{F}[i,i] \mathbf{z}_{t}[i],~~~~\forall i \in [N],
\end{equation}
where $\mathbf{F}$ is a diagonal matrix that captures the effects of the exogenous input vector $\mathbf{z}_{t}$. The SEM in (\ref{eq:OverallRew}) implies that $\mathbf{y}_{t}[i]$ depends on the exogenous input signal $\mathbf{z}_{t}[i]$ as well as the endogenous outputs of single-hop neighbors. The endogenous output $\mathbf{y}_{t}[i]$ represents the \textit{overall reward} of the corresponding base arm $i \in [N]$ at time $t$. Hence, at each time $t$, the overall reward of each base arm consists of (i) a part that directly results from its instantaneous reward and (ii) another part that reflects the effect of causal influences of other base arms' overall rewards.

Based on (\ref{eq:OverallRew}), the base arms' overall rewards are causally related. The adjacency matrix $\mathbf{A}$ represents the causal relationships between the overall rewards; the element $\mathbf{A}[i,j]$ of the adjacency matrix denotes the causal impact of the overall reward of base arm $j$ on the overall reward of base arm $i$, and we have $\mathbf{A}[i,i] = 0$, $\forall i = 1, 2, \dots, N$. In our problem, the adjacency matrix $\mathbf{A}$ is unknown a priori, which means that the agent does not know the causal relationships between the base arms' overall rewards. The matrix form of (\ref{eq:OverallRew}) is defined as 
\begin{equation}
\label{eq:OverallRew-MatrixForm}
    \mathbf{y}_{t} = \mathbf{A} \mathbf{y}_{t} + \mathbf{F}\mathbf{z}_{t}.
\end{equation}
%

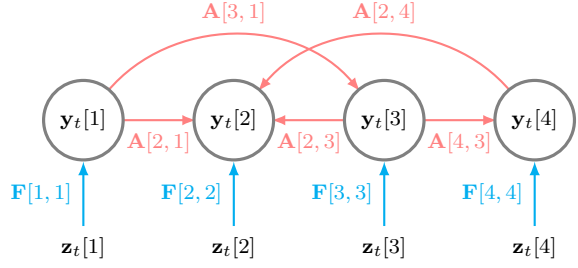
\begin{figure}[t]
\centering
\begin{tikzpicture}[-latex,scale=1, every node/.style={scale=1}, auto, node distance=2cm and 5cm, semithick,
state/.style = {circle, very thick, color=gray!100, fill=white, draw, text=black, minimum width =0.1 cm}] 

\node[state](q1)  {\tiny $\mathbf{y}_{t}[1]$};
\node[state, right of=q1] (q2) {\tiny $\mathbf{y}_{t}[2]$};
\node[state, right of=q2] (q3) {\tiny $\mathbf{y}_{t}[3]$};
\node[state, right of=q3] (q4) {\tiny $\mathbf{y}_{t}[4]$};

\filldraw[red!50, thick] (q1) edge[bend left = 0] node[below] {\scriptsize \textit{$\mathbf{A}{[2,1]}$}} (q2);
\filldraw[red!50, thick] (q3) edge[bend left= 0] node {\scriptsize \textit{$\mathbf{A}{[2,3]}$}} (q2);
\filldraw[red!50, thick] (q1) edge[bend left= +50] node {\scriptsize \textit{$\mathbf{A}{[3,1]}$}} (q3);
\filldraw[red!50, thick] (q4) edge[bend left = -50] node[above] {\scriptsize \textit{$\mathbf{A}{[2,4]}$}} (q2);
\filldraw[red!50, thick] (q3) edge[bend left = 0] node[below] {\scriptsize \textit{$\mathbf{A}{[4,3]}$}} (q4);




\path (0,-1.7) node (x) {\scriptsize $\mathbf{z}_{t}[1]$}
(0,0) node[state](q1) {\scriptsize $\mathbf{y}_{t}[1]$};
\filldraw[cyan!100, thick] (x) -- node[midway] {\scriptsize $\mathbf{F}[1,1]$} (q1);

\path (2,-1.7) node (x) {\scriptsize $\mathbf{z}_{t}[2]$}
(2,0) node[state](q2) {\scriptsize $\mathbf{y}_{t}[2]$};
\filldraw[cyan!100, thick] (x) -- node[midway] {\scriptsize $\mathbf{F}[2,2]$} (q2);

\path (4,-1.7) node (x) {\scriptsize $\mathbf{z}_{t}[3]$}
(4,0) node[state](q3) {\scriptsize $\mathbf{y}_{t}[3]$};
\filldraw[cyan!100, thick] (x) -- node[midway] {\scriptsize $\mathbf{F}[3,3]$} (q3); 

\path (6,-1.7) node (x) {\scriptsize $\mathbf{z}_{t}[4]$}
(6,0) node[state](q4) {\scriptsize $\mathbf{y}_{t}[4]$};
\filldraw[cyan!100, thick] (x) -- node[midway] {\scriptsize $\mathbf{F}[4,4]$} (q4);
\end{tikzpicture}
\caption{An exemplary illustration of a graph with $4$ nodes and the corresponding causal relations. The red directed edges represent the causal relationships within the network.}
\label{fig:ExmpModel}
\end{figure}

By solving (\ref{eq:OverallRew-MatrixForm}) for variable $\mathbf{y}_{t}$ and using (\ref{eq:ExoInput}) in place of $\mathbf{z}_{t}$, we achieve
\begin{equation}
\label{eq:OverallRew-Solved}
    \mathbf{y}_{t}=(\mathbf{I-A})^{-1}\mathbf{F}\operatorname{diag}(\mathbf{b}_{t}) \mathbf{x}_{t}.
\end{equation}
Therefore, we define the \textit{payoff} at time $t$, upon choosing the decision vector $\mathbf{x}_{t}$ by the agent, as
\begin{equation}
\label{eq:Payoff}
    r_{t}(\mathbf{x}_{t}) = {\bf 1}^{\top}\mathbf{y}_{t} = {\bf 1}^{\top} (\mathbf{I-A})^{-1} \mathbf{F} \operatorname{diag}(\mathbf{b}_{t}) \mathbf{x}_{t}, 
\end{equation}
where ${\bf 1}$ is the $N$-dimensional vector of ones. Note that the matrix $(\mathbf{I-A})$ is invertible due to the fact that the graph $\mathcal{G}$ is a DAG, which implies that with a proper indexing of the vertices, the adjacency matrix $\mathbf{A}$ is a strictly upper triangular matrix. In our problem, since the agent directly observes the exogenous input, we assume that the effects of $\mathbf{F}$ on the exogenous input is already integrated in the instantaneous rewards. Hence, to simplify the notation and without loss of generality, we assume that $\mathbf{F} = \mathbf{I}$ in the following.

Finally, at time $t$, when the decision vector $\mathbf{x}_{t}$ is chosen by the agent, the expected payoff can be calculated as
\begin{equation}
\label{eq:ExpPayoff}
    \mu_{t}(\mathbf{x}_{t}) = \mathbb{E}\left [  r_{t}(\mathbf{X}) | \mathbf{X} = \mathbf{x}_{t} \right ],
\end{equation}
where the expectation is taken with respect to the randomness in the reward generating processes. 

The expected payoff defined in (\ref{eq:ExpPayoff}) shows that we are dealing with a linear combinatorial semi-bandit problem with causally related rewards in a non-stationary environment. Note that, for a fixed decision vector $\mathbf{x}$, the expected payoff may change over time due to the possible changes in the expected value of base arms' instantaneous rewards. In addition, due to the randomness in selection of the decision vector $\mathbf{x}_{t}$, the consecutive overall reward vectors $\mathbf{y}_{t}$ become non-identically distributed. 

Let $\mathcal{X} = \left \{ \mathbf{x} \mid \mathbf{x} \in \{0,1\}^{N} \wedge \left \| \mathbf{x} \right \|_{0}  \leq s \right\}$ denote the set of feasible decision vectors, %
%
where $\left\| \cdot \right \|_{0}$ determines the number of non-zero elements in a given vector. Ideally, the agent maximizes the expected accumulated payoff over the time horizon $T$. Alternatively, the agent minimizes the expected regret, i.e., the difference between the expected accumulated payoff of an oracle that follows the optimal policy and that of the agent that follows the applied policy. We define the expected regret as
\begin{equation}
\label{eq:ExpRegret}
    \mathcal{R}_{T}(\mathcal{X}) =  \sum_{t = 1}^{T} [\mu_{t}(\mathbf{x}_{t}^{\ast}) - \mu_{t}(\mathbf{x}_{t})],
\end{equation}
where $\mathbf{x}_{t}^{*} = \text{argmax}_{\mathbf{x} \in \mathcal{X}} ~\mu_{t}(\mathbf{x})$ and $\mathbf{x}_{t}$ denote the optimal decision vector and the selected decision vector under the applied policy at time $t$, respectively.

\section{Decision-Making Strategy}
\label{sec:Strategy}
This section presents our decision-making strategy to minimize the expected regret defined in (\ref{eq:ExpRegret}). Note that the expected payoff defined in (\ref{eq:ExpPayoff}) implies that the knowledge of $\mathbf{A}$ and $\boldsymbol{\beta}_{t}$ are essential to select the best decision vectors that maximize the accumulated payoffs. Hence, our proposed algorithm estimates them before making decisions. More precisely, our proposed policy consists of two learning components: (i) an online graph learning using delayed feedback and (ii) an adaptive Upper Confidence Bound (UCB)-based reward learning. In the following, we describe each component separately and propose our algorithm, namely NDC-SEM.
\subsection{Online Graph Learning under Delayed Feedback}
\label{subsec:GraphLearning}
In our proposed policy, the agent attempts to learn the causal relations; nonetheless, not the entire feedback becomes immediately available. In the following, we develop an online graph learning framework that uses the delayed feedback, i.e., the delayed exogenous input and endogenous output vectors, to estimate the adjacency matrix $\mathbf{A}$.

At each time $t$, due to the existing delay $D$, the agent only observes the feedback up to the time $t - D$. Therefore, at time $t$, we collect the received feedback in $\mathbf{Z}_{t}^{D} = [\mathbf{z}_{1} \hdots \mathbf{z}_{t-D}]$ and $\mathbf{Y}_{t}^{D} = [\mathbf{y}_{1} \hdots \mathbf{y}_{t-D}]$. Then,
\begin{equation}
\label{eq:OveralRew-2xMatrixForm}
    \mathbf{Y}_{t}^{D} = \mathbf{A} \mathbf{Y}_{t}^{D} + \mathbf{Z}_{t}^{D}.
\end{equation}
We assume that the right indexing of the vertices is known prior to estimating the ground truth adjacency matrix. At each time $t$, we exploit the received feedback $\mathbf{Y}_{t}^{D}$ and $\mathbf{Z}_{t}^{D}$ as the input to a parametric graph learning algorithm \cite{giannakis2018topology, dong2019learning}. Formally, at time $t$, we use the following optimization problem to estimate the adjacency matrix.
\begin{equation}
\label{eq:Optimization-A}
\begin{aligned}
\hat{\mathbf{A}}_{t} = \underset{\mathbf{A}}{\text{argmin}}~ &\left \| \mathbf{Y}_{t}^{D} - \mathbf{A} \mathbf{Y}_{t}^{D} - \mathbf{Z}_{t}^{D} \right \|_{2}^{2} + \lambda \left \| \mathbf{A} \right \|_{1} \\
\textrm{s.t.} ~~~ &\mathbf{A}[i,j] \geq 0, ~ \forall i, j, \\ 
&\mathbf{A}[i,j]=0, ~ \forall i \geq j,
\end{aligned}
\end{equation}
where $\left \| \cdot \right \|_{2}$ and $\left \| \cdot \right \|_{1}$ represent the $L^{2}\text{-norm}$ and $L^{1}\text{-norm}$ of matrices, respectively. Moreover, $\lambda$ is the regularization parameter. The regularization term in (\ref{eq:Optimization-A}) imposes the sparsity property on the estimated matrix $\hat{\mathbf{A}}_{t}$. In addition, it guarantees that the optimization problem (\ref{eq:Optimization-A}) is convex.
\subsection{Adaptive Decision Vector Selection}
\label{subsec:Alg}
Our proposed decision-making policy is presented in \textbf{Algorithm \ref{Alg:NDC-SEM}}. Our decision-making strategy relies on confidence regions for rewards. Moreover, it adapts to changes in the environment by using a discount factor $\gamma \in (0, 1)$ when estimating the expected value of base arms' instantaneous rewards. The discount factor $\gamma$, given as input to the algorithm, helps to reduce the influence of observations with time; by using the discount factor, the agent gives more importance to recent observations relative to those in the distant past. Formally, for each base arm $i \in [N]$ at time $t$, we define
%
\begin{equation}
    \label{eq:AvgRew}
    \hat{\boldsymbol{\beta}}_{t}[i] = \frac{\sum_{\tau = 1}^{t-D} \gamma^{t - \tau} \mathbf{b}_{\tau}[i] \mathbbm{1}\left\{ \mathbf{x}_{\tau}[i] = 1\right\} }{\mathbf{M}_{t}^{\gamma,D}[i]},
\end{equation}
where $\mathbf{M}_{t}^{\gamma,D}[i] = \sum_{\tau = 1}^{t-D} \gamma^{t - \tau} \mathbbm{1}\left\{ \mathbf{x}_{\tau}[i] = 1\right\}$.
%
%

\begin{algorithm}[t]
\caption{NDC-SEM for NDC bandits with Structural Equation Models.}
\label{Alg:NDC-SEM}
\textbf{Input}: Sparsity parameter $s$, discount factor $\gamma$, initialization matrix $\mathbf{H}$. \\
\vspace{-4mm}
\begin{algorithmic}[1] 
\FOR{$t = 1, \dots, N$}
    \STATE Select column $t$ of the initialization matrix $\mathbf{H}$ as the decision vector $\mathbf{x}_{t}$.
    \STATE Receive feedback $\mathbf{z}_{t-D}$ and $\mathbf{y}_{t-D}$ for $t > D$.
\ENDFOR
\FOR{$t = N+1, \dots, T$}
    \STATE Obtain $\hat{\mathbf{A}}_{t-1}$ by solving (\ref{eq:Optimization-A}).
    %
    \STATE Calculate $\mathbf{E}_{t-1}[i]$ using (\ref{eq:UCB}), $\forall i \in [N]$.
    \STATE Select decision vector $\mathbf{x}_{t}$ that solves (\ref{eq:Optimization-x}).
    \STATE Receive feedback $\mathbf{z}_{t-D}$ and $\mathbf{y}_{t-D}$ for $t > D$.
    %
\ENDFOR
\end{algorithmic}
\end{algorithm}

In the initialization phase, NDC-SEM algorithm uses an upper-triangular \textbf{initialization matrix} $\mathbf{H} \in \{0, 1\}^{N \times N}$. At each time $t$ during the first $N$ times of play, NDC-SEM selects the column $t$ of $\mathbf{H}$ as the corresponding decision vector. We create the matrix $\mathbf{H}$ as follows. All diagonal elements of $\mathbf{H}$ are equal to $1$. As for the column $i$, if $i \leq s$, we set all elements above diagonal to $1$. If $s + 1 \leq i \leq N$, we select $s-1$ elements above diagonal uniformly at random and set them to $1$. The remaining elements are set to $0$. Such a specific strategy in the initialization phase creates rich data that helps to learn the ground truth adjacency matrix. In addition, it guarantees that all the base arms are pulled at least once, and the matrix $\mathbf{H}$ is full rank. Consequently, the adjacency matrix $\mathbf{A}$ is uniquely identifiable from the collected feedback \cite{Bazerque13:IOS}.

In the next phase, the NDC-SEM algorithm takes two consecutive steps at each time $t$ to learn the causal relationships and the expected instantaneous rewards of the base arms. In the first step, it uses the collected delayed feedback $\mathbf{Y}_{t}^{D}$ and $\mathbf{Z}_{t}^{D}$ to estimate the adjacency matrix by solving the optimization problem (\ref{eq:Optimization-A}). In the second step, it uses the reward observations to calculate the UCB index $\mathbf{E}_{t}[i]$ for each base arm $i$, defined as
\begin{equation}
    \label{eq:UCB}
    \mathbf{E}_{t}[i] = \hat{\boldsymbol{\beta}}_{t}[i] + 2 \sqrt{\frac{\xi (s+1) \log{m_{t}^{\gamma}}}{\mathbf{M}_{t}^{\gamma,D}[j]}},
\end{equation}
where $\xi$ is a tunable parameter that controls the exploration power of the algorithm and $m_{t}^{\gamma} = \sum_{\tau = 1}^{t} \gamma^{t-\tau}$.
Afterward, the NDC-SEM algorithm selects a decision vector $\mathbf{x}_{t}$ using the current estimate of the adjacency matrix and the developed UCB indices of the base arms. Let $\mathbf{E}_{t} = [\mathbf{E}_{t}[1], \mathbf{E}_{t}[2], \dots, \mathbf{E}_{t}[N]]$. At time $t$, it selects $\mathbf{x}_{t}$ as
\begin{equation}
\label{eq:Optimization-x}
\mathbf{x}_{t} = \underset{\mathbf{x} \in \mathcal{X}}{\text{argmax}} \quad \mathbf{1}^{\top} (\mathbf{I} - \hat{\mathbf{A}}_{t-1})^{-1} \operatorname{diag}(\mathbf{E}_{t-1}) \mathbf{x}
\quad \textrm{s.t.} \quad \left \| \mathbf{x} \right \|_{0} \leq s.
\end{equation}

The fundamental aspect of our algorithm is that it works with delayed observations for each base arm rather than the delayed payoff observations for each super arm. As the same base arm can be included in different selected super arms, we can use the information obtained from selecting a super arm to improve our payoff estimation of other relevant super arms. This, combined with the fact that our algorithm adapts to non-stationary rewards and simultaneously learns the adjacency matrix, significantly speeds up the learning process, resulting in high performance for our proposed algorithm.

%
\section{Theoretical Analysis}
\label{sec:RegAnalysis}
In this section, we prove an upper bound on the expected regret of NDC-SEM algorithm. We use the following definitions in our regret analysis. Let $[T] = \{1, 2, \dots, T\}$. For any decision vector $\mathbf{x} \in \mathcal{X}$, let $\Delta_{t}(\mathbf{x}) = \mu_{t}(\mathbf{x}_{t}^{\ast}) - \mu_{t}(\mathbf{x})$. We define $\Delta_{\max} = \underset{t \in [T]}{\max}~ \underset{\mathbf{x}: \mu_{t}(\mathbf{x}) < \mu_{t}(\mathbf{x}_{t}^{\ast})}{\max}~ \Delta_{t}(\mathbf{x})$ and $\Delta_{\min} = \underset{t \in [T]}{\min}~ \underset{\mathbf{x}: \mu_{t}(\mathbf{x}) < \mu_{t}(\mathbf{x}_{t}^{\ast})}{\min}~ \Delta_{t}(\mathbf{x})$. Moreover, let $\mathbf{w}_{t}^{\top} = {\bf 1}^{\top}(\mathbf{I} - \hat{\mathbf{A}}_{t-1})^{-1}\text{diag}(\mathbf{x}_{t})$. We define $w_{\max} = \underset{t}{\max} ~\underset{i}{\max} ~\mathbf{w}_{t}[i]$.

\begin{theorem}
\label{thm:1}
Let $\xi > \frac{1}{2(s+1)}$. The expected regret of NDC-SEM algorithm is upper bounded as
\begin{align} \nonumber
   \mathcal{R}_{T}(\mathcal{X}) \hspace{-0.5mm}
   &\leq \hspace{-0.5mm} {\Bigg[} 1 + J(\gamma) \Upsilon_{T} +  \lceil{T(1-\gamma)\rceil}
   \hspace{-0.5mm} \left( \left \lceil {\frac{16 \xi s^{2} w_{\max}^{2} (s+1) \log{m_{T}^{\gamma}}}{ \Delta_{\min}^{2} }} \right \rceil \hspace{-0.5mm} \gamma^{-\frac{1}{1-\gamma}} + D \hspace{-0.5mm} \right) 
   \\
   &\hspace{20mm}+ 2 s^{p} \left\lceil\frac{1}{1-\gamma}\right\rceil^{2s} 
   \left( \frac{1}{1-\gamma}
   + \left\lceil \frac{\log \frac{1}{1-\gamma}}{\log{(1+\eta)}} \right\rceil^{p} \frac{T (1-\gamma)^{p}}{(1-\gamma^{\frac{1}{1-\gamma}})^{p}} \right) {\Bigg]} N \Delta_{\max}.
\end{align}
\end{theorem}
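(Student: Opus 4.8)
The plan is to follow the discounted-UCB regret template of Garivier--Moulines, adapted to the combinatorial, delayed, and causally-structured setting, and to organize the argument so that the three bracketed groups of terms in the statement arise from three distinct failure mechanisms. The starting point is the decomposition $\mathcal{R}_{T}(\mathcal{X}) = \sum_{t=1}^{T}\Delta_{t}(\mathbf{x}_{t}) \le \Delta_{\max}\sum_{t=1}^{T}\mathbbm{1}\{\mathbf{x}_{t}\neq\mathbf{x}_{t}^{\ast}\}$, so it suffices to bound the expected number of rounds in which a suboptimal super arm is chosen. I would argue that every suboptimal selection implies at least one of (i) a reward-estimation failure, (ii) a round contaminated by a recent breakpoint, or (iii) a graph-estimation failure, and then bound the frequency of each. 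The final multiplication by $N$ comes from union-bounding the exploratory requirement over the $N$ base arms that can enter the combinatorial gap, and by $\Delta_{\max}$ from the per-round regret bound.

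First I would treat the reward-estimation failures under an accurate graph estimate. Partitioning $[T]$ into $\lceil T(1-\gamma)\rceil$ effective blocks of length on the order of $1/(1-\gamma)$, I would prove a Hoeffding-type concentration bound for the discounted, delayed empirical mean $\hat{\boldsymbol{\beta}}_{t}[i]$ -- a weighted sum with weights $\gamma^{t-\tau}$ restricted to $\tau\le t-D$ -- which shows that $\mathbf{E}_{t-1}[i]$ is a valid upper confidence bound with high probability once $\mathbf{M}_{t}^{\gamma,D}[i]$ is large enough. The hypothesis $\xi>\frac{1}{2(s+1)}$ is exactly what makes the resulting tail sum over blocks summable. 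The exploration count $\big\lceil 16\xi s^{2}w_{\max}^{2}(s+1)\log m_{T}^{\gamma}/\Delta_{\min}^{2}\big\rceil$ is the number of pulls needed for the confidence half-width to drop below the effective gap; the factors $s$ and $w_{\max}$ enter because the linear objective of (\ref{eq:Optimization-A})'s companion selection rule scales each base-arm deviation by its effective weight $\mathbf{w}_{t}[i]\le w_{\max}$ and by the sparsity $s$, while $\gamma^{-1/(1-\gamma)}$ converts the raw count into the discounted count, since within one effective window the discount never falls below $\gamma^{1/(1-\gamma)}$. The delay contributes the additive $D$ per block, absorbing the at-most-$D$ most recent rounds whose feedback is unavailable. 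The non-stationarity is handled in parallel: at most $\Upsilon_{T}$ blocks straddle an abrupt change, during which $\hat{\boldsymbol{\beta}}_{t}$ is biased by pre-change data, and this per-breakpoint cost is collected into $J(\gamma)\Upsilon_{T}$, with the isolated constant $1$ accounting for the initialization phase.

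The hard part will be the graph-estimation error, which produces the third group of terms. Since $\mathbf{A}$ is strictly upper triangular with longest path $p$, it is nilpotent with $\mathbf{A}^{p+1}=0$, so $(\mathbf{I}-\mathbf{A})^{-1}=\sum_{k=0}^{p}\mathbf{A}^{k}$ and any perturbation $\hat{\mathbf{A}}_{t-1}-\mathbf{A}$ propagates through the payoff via at most $p$ matrix powers -- this is what generates the exponents $p$ and the factor $s^{p}$. I would first establish a convergence rate for the regularized least-squares estimator of (\ref{eq:Optimization-A}) as a function of the effective sample size, which under discounting is of order $1/(1-\gamma)$ and surfaces through the denominator $(1-\gamma^{1/(1-\gamma)})^{p}$. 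A single high-probability bound on $\|\hat{\mathbf{A}}_{t-1}-\mathbf{A}\|$ does not suffice because the decision is a discrete $\operatorname{argmax}$ over $\mathcal{X}$; I would therefore peel the discounted sample count (which ranges up to $\sim 1/(1-\gamma)$) on a geometric grid of ratio $1+\eta$, yielding the $\big\lceil \log\frac{1}{1-\gamma}/\log(1+\eta)\big\rceil^{p}$ covering factor after raising to the $p$-th power from the $p$-hop Neumann propagation, and then union-bound over the at-most-$\binom{N}{s}$ feasible super arms, which contributes the $\lceil 1/(1-\gamma)\rceil^{2s}$ factor.

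Summing the resulting per-round misidentification probabilities over the horizon gives the $T(1-\gamma)^{p}/(1-\gamma^{1/(1-\gamma)})^{p}$ scaling, with the separate $1/(1-\gamma)$ covering the burn-in before the estimator concentrates. The principal obstacle throughout is the coupling between the two learning components: the selection $\mathbf{x}_{t}$ depends jointly on $\hat{\mathbf{A}}_{t-1}$ and $\mathbf{E}_{t-1}$, so I must show that a wrong decision forces either the reward confidence bound or the graph estimate to be inaccurate, and that these two error sources can be controlled \emph{simultaneously} under the same discounting-and-delay window rather than independently. Once that joint control is in place, adding the three counted contributions and scaling by $N\Delta_{\max}$ assembles the stated bound.
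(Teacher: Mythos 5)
Your overall skeleton---reducing the regret to $\Delta_{\max}$ times the expected number of suboptimal selections, splitting those into under-explored rounds, breakpoint-contaminated rounds, and concentration failures, and using the discounted Garivier--Moulines machinery with the peeling parameter $\eta$---matches the paper's proof. The paper implements the first split through per-base-arm counters $\mathscr{T}_{i}(t)$ and Lemma~\ref{lem:CounterBound}, which is where the $\lceil T(1-\gamma)\rceil(\ell\gamma^{-1/(1-\gamma)}+D)$ block and the $J(\gamma)\Upsilon_{T}$ term come from, essentially as you describe.

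However, your treatment of the third group of terms contains a genuine gap. You attribute $2s^{p}\lceil\frac{1}{1-\gamma}\rceil^{2s}\big(\cdot\big)$ to a graph-estimation error, and propose to prove a convergence rate for the regularized least-squares estimator in (\ref{eq:Optimization-A}) followed by a covering argument over $\binom{N}{s}$ super arms. That is not how these terms arise, and the plan as stated would not reproduce them: a bound on $\|\hat{\mathbf{A}}_{t-1}-\mathbf{A}\|$ for a LASSO-type estimator would necessarily involve the regularization parameter $\lambda$ and design conditions on the collected feedback, none of which appear in the statement. The paper instead exploits the designed full-rank upper-triangular initialization matrix $\mathbf{H}$: by the identifiability result of Bazerque et al., $\hat{\mathbf{A}}_{t-1}=\mathbf{A}$ holds with probability $1$ for all $t>N+D+1$, so the graph-error event contributes nothing. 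The $s^{p}$ and the exponent $p$ come from expanding $(\mathbf{I}-\mathbf{A})^{-1}=\mathbf{I}+\mathbf{A}+\dots+\mathbf{A}^{p}$ and applying the \emph{reward} concentration bound to each of the $p$ hops, which turns the per-arm tail $(m_{t}^{\gamma})^{-2\xi(s+1)(1-\eta^{2}/16)}=(m_{t}^{\gamma})^{-1}$ (with $\eta=4\sqrt{1-\tfrac{1}{2\xi(s+1)}}$) into $(m_{t}^{\gamma})^{-p}$. Likewise, the factor $\lceil\frac{1}{1-\gamma}\rceil^{2s}$ is not a union bound over super arms; it is the sum over all possible values of the $2s$ discounted counters $\mathbf{M}^{\gamma,D}[v_{j}]$ and $\mathbf{M}^{\gamma,D}[u_{j}]$, each of which is at most $\lceil\frac{1}{1-\gamma}\rceil$. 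Without invoking exact identifiability after initialization, you would also face the coupling problem you flag at the end, whereas in the paper that coupling disappears because only the reward estimates remain random once $t>N+D+1$.
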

\begin{proof}
See Appendix \ref{subsec:ProofThm1}.
\end{proof}
%
It is possible to extend our theoretical analysis in Theorem \ref{thm:1} for random delays. In this case, the only affected part of the proof is the bound (\ref{eq:lemma1hyp}) derived in Lemma \ref{lem:CounterBound} (See the proof in Appendix \ref{app:mainresults} for details). To bound this event, we assume the worst-case scenario and use the maximum delay over the entire time horizon $T$, i.e., $D_{\max} = \max_{t \in [T]} D_{t}$. Then, using Corollary \ref{cor:RandomDelay}, the bound (\ref{eq:lemma1hyp}) can be replaced by (\ref{eq:cor1hyp}) in our proof. Hence, the expected regret will be of order $O(D_{\max})$ with respect to the delay variable.
%
\section{Numerical Analysis}
\label{sec:NumAnalysis}
In this section, we present the results of numerical experiments to provide more insight into the impact of delay, non-stationarity, and structural dependencies on the performance of learning algorithms. We show that our proposed algorithm can mitigate these impacts by learning the causal relations from delayed feedback to improve the decision-making process while adapting to changes in the environment in an efficient way. We test our algorithm in different scenarios using synthetic and real-world datasets and compare it with state-of-the-art benchmark algorithms.

\textbf{Benchmark Policies:}
We compare NDC-SEM with two categories of combinatorial semi-bandit algorithms; those that are agnostic towards learning the causal relations and the one benchmark that learns the causal structure of the problem. The former category in our experiment includes \textbf{CUCB} \cite{chen2016combinatorial}, \textbf{CTS} \cite{huyuk2019analysis}, and \textbf{FTRL} \cite{zimmert2019beating}. At each time, the CUCB policy uses an approximation oracle that takes as input the calculated UCB index for base arms and outputs a super arm. The CTS policy utilizes the Thompson sampling and an oracle to select a super arm at each time of play. The CUCB and CTS algorithms are designed to work with i.i.d. random variables. Moreover, they are delay-agnostic. The FTRL policy relies on the method of Follow-the-Regularized-Leader to select a super arm at each time. In addition, it does not take the possible delays in observations into account. The latter category includes only \textbf{SEM-UCB} \cite{Nourani22:LCS} that learns the structural dependencies and exploits this knowledge to select a super arm at each time. It is a UCB-based algorithm and works based on the individual observations of base arms rather than the payoff observations of super arms as a whole. The SEM-UCB algorithm is specially designed for stationary environments. In addition, it is delay-agnostic. Finally, we also consider a \textbf{random} policy that selects a super arm uniformly at random at each time.

\subsection{Synthetic Dataset}
\label{subsec:SynData}
We start our experiments by assessing the performance of our algorithm on a synthetic dataset. This way, we have access to the oracle, and therefore, we can perform various analyses on our proposed method. More specifically, we can compare the selected decision vectors by NDC-SEM with the decisions made by the oracle to provide more insight into the effectiveness of our proposed method. The setting of our simulation is as follows.

\textbf{Experimental Setup:}
We create a weighted directed acyclic graph consisting of $N = 10$ nodes. The edge density of the ground truth graph is $0.09$. The non-zero elements of the adjacency matrix $\mathbf{A}$ are drawn from a continuous uniform distribution over $[0.4,0.7]$.  The instantaneous rewards $\mathbf{b}_{t}[i]$ for each base arm $i$ are drawn from a Bernoulli distribution with piece-wise constant mean $\boldsymbol{\beta}_{t}[i]$. We consider $\Upsilon_{T} = 3$ change points in the expected instantaneous rewards at times $\{1000, 2500, 4000\}$. In Appendix \ref{App:ExpInstRew}, 
we elaborate more on the settings of expected instantaneous rewards. As demonstrated in Section \ref{sec:ProFor}, we generate the vector of overall rewards according to the SEM in (\ref{eq:OverallRew}). The regularization parameter $\lambda$ is tuned by grid search over $[10^{-5},10^{6}]$. We evaluate the estimated adjacency matrix at each time $t$ by using the mean squared error defined as $\text{MSE} = \frac{1}{N^{2}} \left\| \mathbf{A} - \hat{\mathbf{A}}_{t} \right\|_{\text{F}}^{2}$, where $\left \| \cdot \right \|_{\text{F}}$ denotes the Frobenius norm.

For the results to be comparable, we apply all the benchmarks to the vector of overall reward $\mathbf{y}_{t}$ at each time $t$. If a benchmark requires $\mathbf{y}_{t}$ to be in $[0,1]$, we feed the normalized version of $\mathbf{y}_{t}$ to the corresponding algorithm. Finally, in our experiments, we choose the sparsity parameter $s= 4$, meaning that the algorithms can choose $4$ base arms at each time of play. We run the experiment for $T = 5000$ time steps and repeat the experiment by considering three different values for delay $D \in \{50, 200, 400\}$. We tune the discount factor for NDC-SEM and set it to $\gamma = 0.985$. \textbf{Table \ref{table:PolicyParams}} in Appendix \ref{app:AddInfo-Synthetic} lists all the tuned parameters used in our experiments.

\textbf{Regret Comparison:}
We run the algorithms using the aforementioned setup. In \textbf{Fig. \ref{fig:Reg-Synthetic}}, we depict the trend of cumulative expected regret over time for each policy for different choices of delay $D$. Here, the oracle receives the feedback without delay. As we see, NDC-SEM outperforms all the other policies and can comply faster with abrupt environmental changes. This is because NDC-SEM estimates the graph structure using the delayed feedback; hence, it has a better knowledge of the causal relationships in the network. Moreover, NDC-SEM uses a discount factor $\gamma$ to weight the observations when estimating the expected instantaneous rewards. Therefore, it has a smoother curve around change points, unlike other policies that jump suddenly. We emphasize that our algorithm can deal with delayed, causally related, and non-i.i.d. variables. This is a significant improvement over the previous methods that either do not consider delayed and non-i.i.d feedback or do not learn the causal relations.

\begin{figure*}[t!]
    \centering
    \hspace{-16mm}
     \begin{subfigure}[t]{0.3\textwidth}
         \centering
         \includegraphics[width=1.04\textwidth]{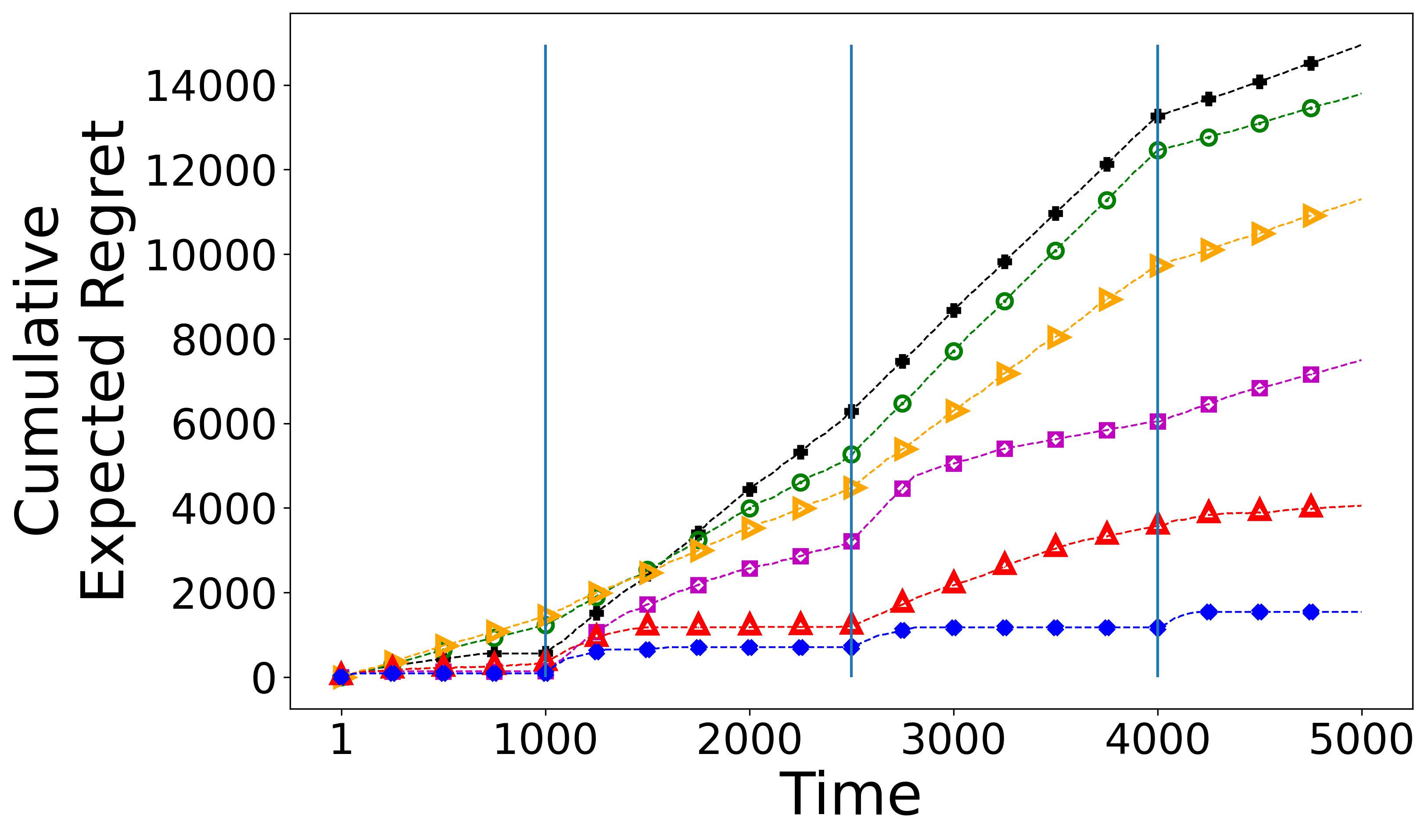}
         \label{subfig:delay1}
     \end{subfigure}
    \hspace{0mm}
     \begin{subfigure}[t]{0.3\textwidth}
         \centering
         \includegraphics[width=0.95\textwidth]{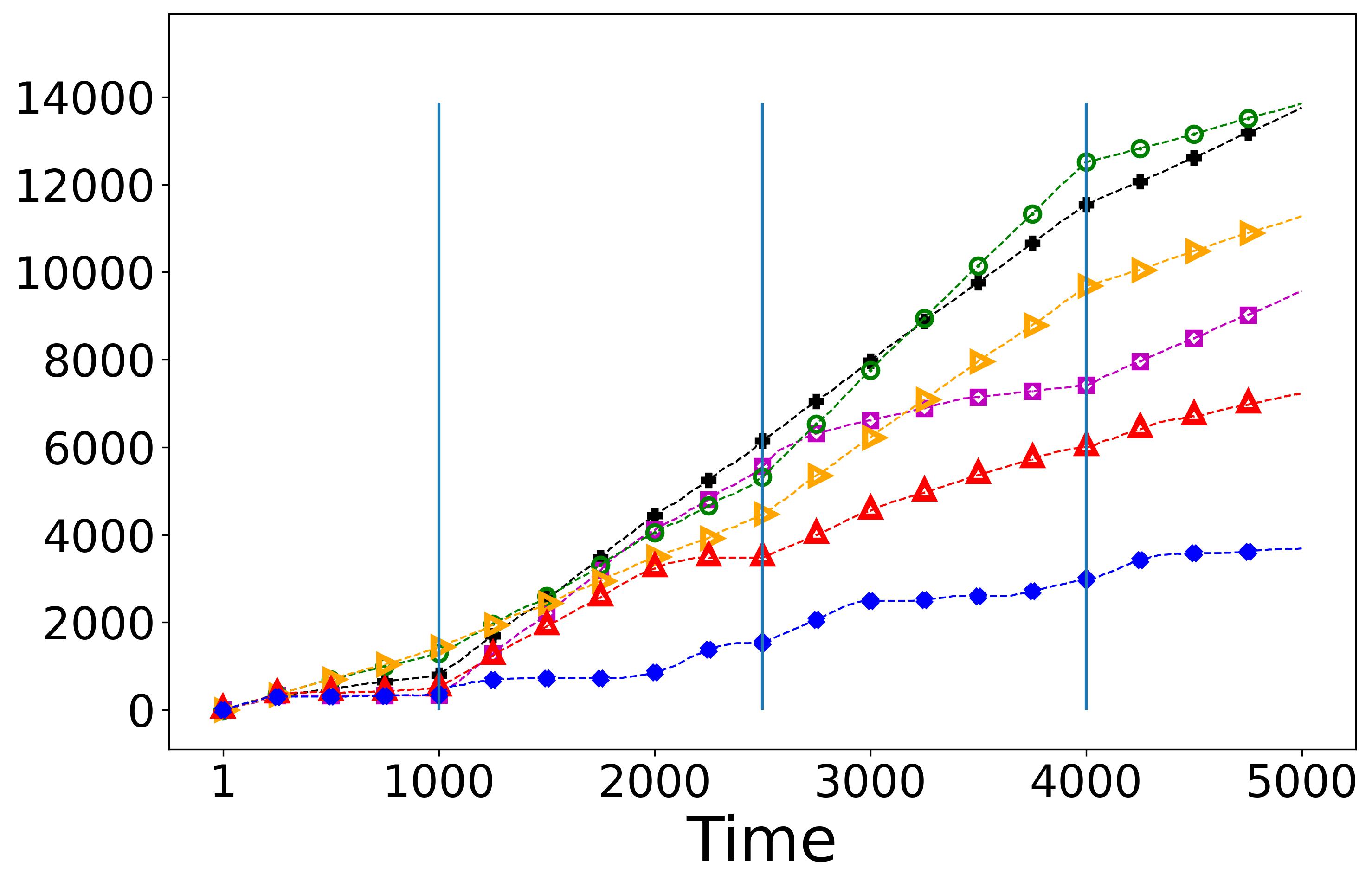}
         \label{subfig:delay2}
     \end{subfigure}
    \hspace{-3.5mm}
     \begin{subfigure}[t]{0.3\textwidth}
         \centering
         \includegraphics[width=1.29\textwidth]{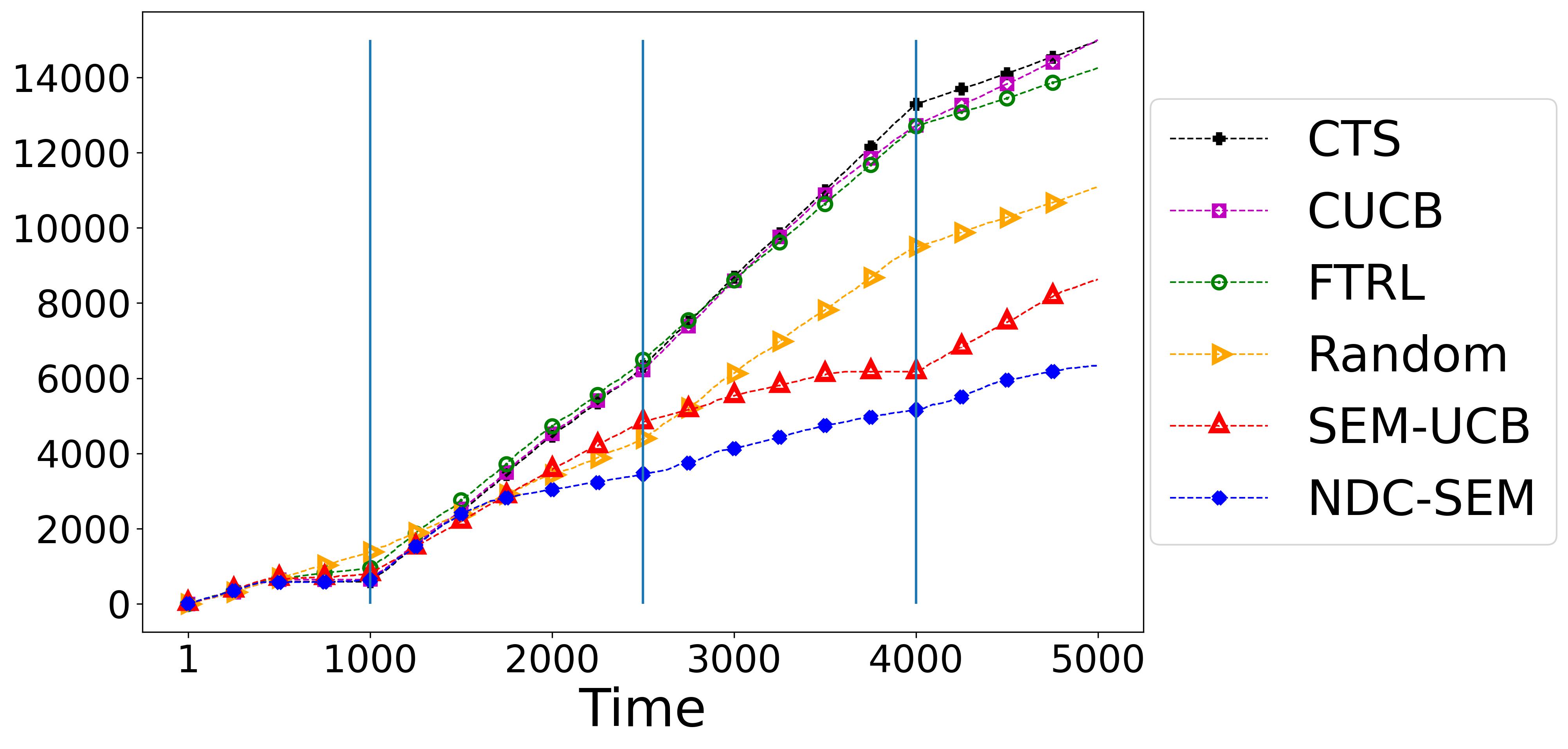}
         \label{subfig:delay3}
     \end{subfigure}
    \caption{Cumulative expected regret of different policies with delay $D \in \{50, 200, 400\}$ from left to right. Vertical lines show the change points.}
    \label{fig:Reg-Synthetic}
\end{figure*}

We present additional results of our experiments using synthetic data in Appendix \ref{App:AdaptationtoChanges}.

\subsection{Covid-19 Dataset}
\label{subsec:RealData}
In addition to the experiments using synthetic data, we evaluate our proposed algorithm on the Covid-19 outbreak dataset of Italy, which includes the daily new infected cases during the pandemic for different regions.\footnote{https://github.com/pcm-dpc/COVID-19} The NDC bandit formulation provides a suitable framework for analysis of Covid-19 spread for the following reasons: (i) Due to movement between regions, there exists a causal impact amongst the daily new cases of different regions. Hence, in each region, the daily new cases result from the causal spread of Covid-19 amongst the regions \cite{Mastakouri20:CAC} and the region-specific characteristics \cite{guaitoli2021covid}, such as social, cultural, and geographical characteristics. (ii) Each region has a specific exposure risk of Covid-19 infection due to different regional characteristics. Naturally, such exposure risk varies over time as our behavior changes, e.g., due to the start of holiday seasons, quarantine orders, or even temperature variations \cite{Meyer20:ETH, Steiger21:CGA}, or as immunity develops, e.g., due to vaccination coverage. Thus, we are dealing with a changing environment. (iii) Finally, the virus testing results are typically reported or even recorded with a delay. Hence, the daily new cases are associated with a delay.

During the Covid-19 pandemic, containing the virus outbreak has been one of the major concerns of governments. To this end, health authorities have considered different measurements for monitoring the outbreak and detecting the regions likely to become coronavirus hotspots. The examples include the daily number of infected cases, incidence rate, and reproduction number (also known as R-value). For example, Germany monitors the $7$-day incidence rate that shows the number of new infections within the past week per $100,000$ population. Consequently, based on the incidence rate of new infections, the German authorities decide whether to impose restrictions, such as enforcing mask-wearing, implementing curfews, making home office obligatory, and banning travel. However, none of such measurements mentioned above considers a region's daily cases' impact on other regions' daily cases. Thus, it is only natural that health authorities seek to find the regions that contribute the most to the total number of daily new cases in the country \cite{bridgwater2021identifying}. By the end of this experiment, we address this critical problem and highlight that our algorithm can detect the optimal candidate regions for political interventions. To our knowledge, no previous work simultaneously considers delay, non-stationarity, and casual impacts amongst regions when analyzing the spread of a contagious disease such as Covid-19.

In the following, we follow our terminology in Section \ref{sec:ProFor} and use the \textit{overall reward} $\mathbf{y}_{t}[i]$ and the \textit{instantaneous reward} $\mathbf{b}_{t}[i]$ to refer to the \textit{overall daily new cases} and the \textit{region-specific daily new cases} in region $i$ at each time (day) $t$, respectively. Naturally, the overall daily new cases include the region-specific daily new cases. 

\textbf{Settings and Data Preparation:}
We consider a period with $T = 80$ days that corresponds to recorded daily new cases from $31$ July to $18$ October, $2020$, for $N = 21$ regions within Italy. The dataset includes only the region's overall daily new cases. Thus, to apply our algorithm, we estimate the underlying distributions of the region-specific daily new cases using a kernel density estimation (See Appendix \ref{App:Region-SpecificDist} for detailed information). We sample from the aforementioned estimated distributions to create the region-specific daily cases for each region. Afterward, to simulate piece-wise stationary reward generating processes, we consider $\Upsilon_{T} = 1$ change point at the day $t= 40$. At the change point, we draw a random integer $k \in \{1, \dots, N-1\}$ and shift the base arms cyclically $k$ times forward. Hence, the instantaneous and overall reward of region $i$ becomes those of region $(i + k - 1 ~\textup{mod}~ N) + 1$. This guarantees that the expected instantaneous reward is piece-wise constant with respect to time. In our experiment, we choose $s = 5$ and consider a delay of $3$ days in receiving the testing results. 
Finally, we tune the parameters of NDC-SEM by performing a grid search and set them to $\gamma = 0.85$ and $\xi = 0.1$. In Appendix \ref{App:HyperparameterTuning}, we elaborate more on the tuning process of parameters.


\begin{figure}[t!]
    \centering
    \includegraphics[width=0.7\textwidth]{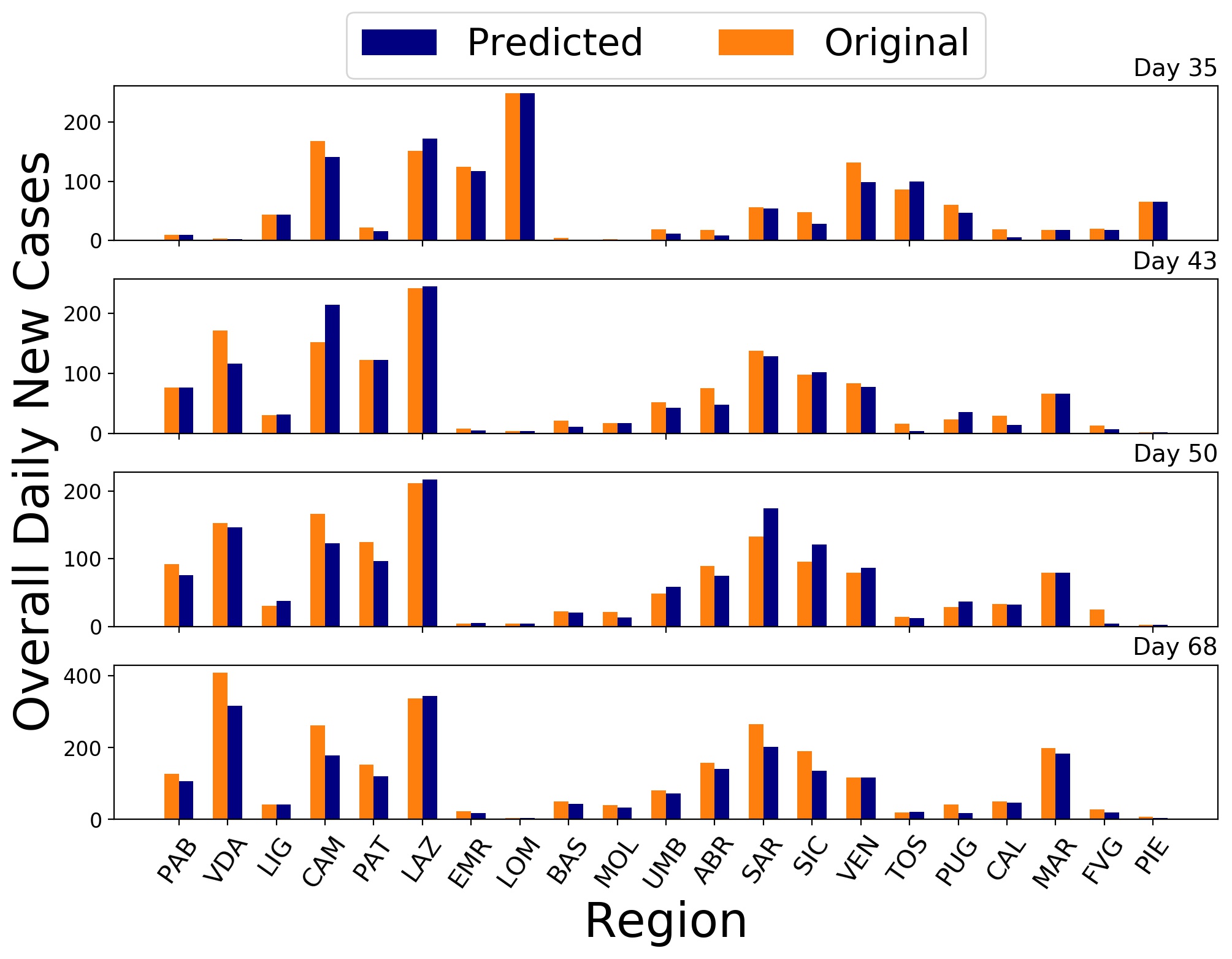}
    \caption{Comparison of the original overall daily new cases and the corresponding predicted values for different days in the validation set.}
    \label{fig:Error}
\end{figure}
\textbf{Learning the Causal Relationships under Delayed Feedback:}
The first learning component in our proposed policy corresponds to learning the ground truth adjacency matrix $\mathbf{A}$ using (\ref{eq:Optimization-A}). To be more realistic, since the causal spread of the disease might create cycles, we include cyclic graphs in the search space of the optimization problem (\ref{eq:Optimization-A}). Further, we split the data into train and validation (tuning) sets in a 90:10 ratio with $72$ and $8$ data samples, respectively. More specifically, we consider $8$ subsets of consecutive days, each with a length of $10$ days. We pick one day in each subset to include in the validation set and add the remaining $9$ days to the train set. The validation set is then used to tune the regularization parameter $\lambda$ online, i.e., by using the already collected validation data up to the current time. At day $t$, we calculate the prediction error as $\epsilon(t) = \frac{1}{N K(t)} \sum_{\tau \in \mathcal{K}(t)} \left\| \mathbf{y}_{\tau} - \hat{\mathbf{y}}_{\tau} \right\|_{1}$,
%
where $\mathcal{K}(t)$ is the validation set at day $t$ with cardinality $K(t) = |\mathcal{K}(t)|$. Moreover, $\mathbf{y}_{\tau}$ and $\hat{\mathbf{y}}_{\tau}$ are the ground truth validation data and the corresponding predicted value using the estimated graph for the day $\tau$, respectively.

\textbf{Fig. \ref{fig:Error}} compares the ground truth overall daily new cases and the corresponding predicted value using the estimated graph on $4$ different days in the validation set. Due to space limitations, we use abbreviations for region names. \textbf{Table \ref{tab:AbbList}} in Appendix \ref{App:NameAbbreviations} lists the original regions' names together with the corresponding abbreviations. As we see, NDC-SEM efficiently estimates the regions' overall daily new cases using the delayed feedback, which helps to improve the decision-making process.

\begin{figure}[t!]
    \centering
    \includegraphics[width=0.75\textwidth]{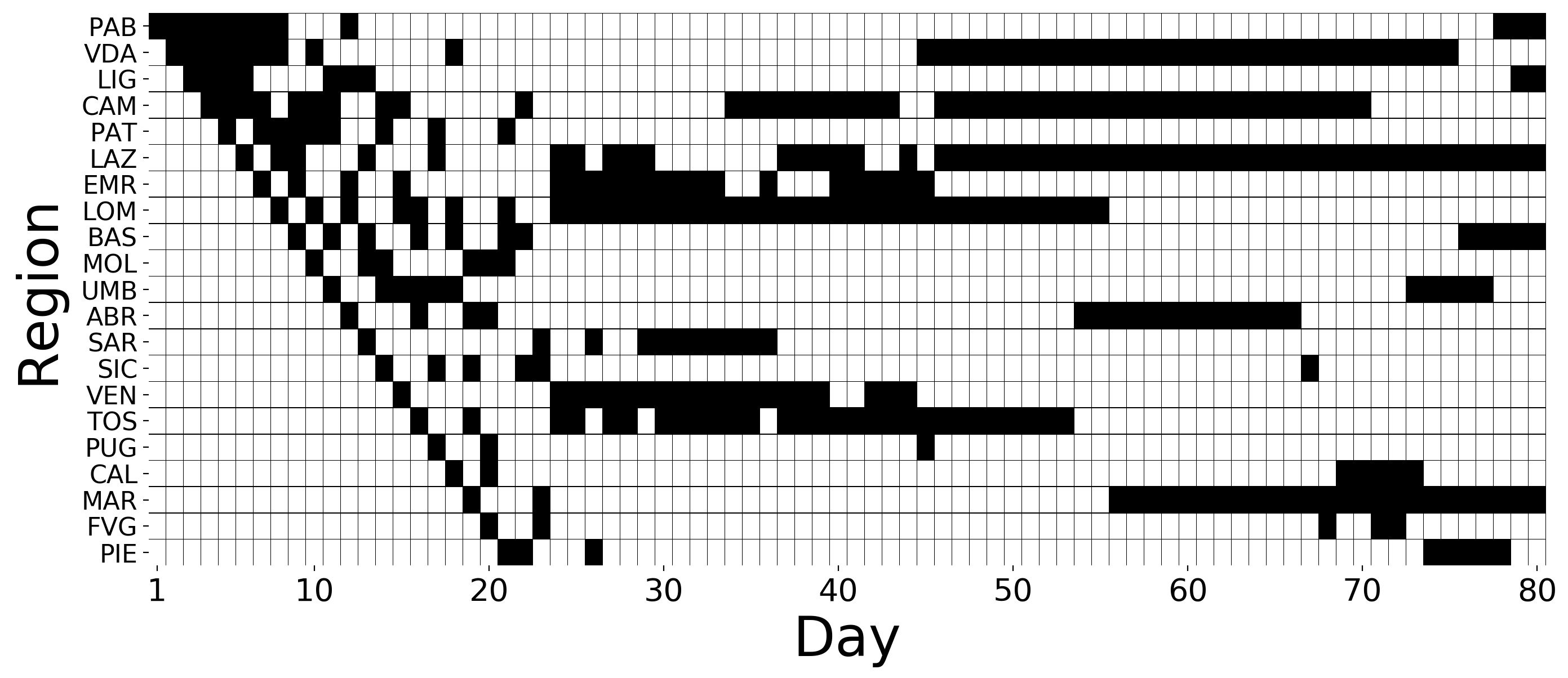}
    \caption{Selected regions by NDC-SEM on each day.}
    \label{fig:SuperArmSelection}
\end{figure}
\textbf{Adaptive Learning of the Regions with Highest Contribution:}
Using the setup mentioned above, we run the NDC-SEM algorithm and show the agent's decision-making over time in \textbf{Fig. \ref{fig:SuperArmSelection}}. The $5$ selected regions at each day are shown by black rectangles. Based on our framework, we represent the selected regions as those with the highest contributions to the Covid-19 spread during the study period of our experiment. As we see, NDC-SEM adaptively selects the regions over time; that is why some selected regions after the change point differ from those selected before the change point. For example, Marche, Abruzzo, and Valle d’Aosta regions are selected only after the change point.

The above-explained adaptive selection of regions is a significant advantage over the SEM-UCB benchmark policy, as SEM-UCB does not consider the non-stationarity and the delay. This is also evident from \textbf{Fig. \ref{fig:EstCollectedOverallRew}} in Appendix \ref{App:comparison}, where we show that NDC-SEM achieves a higher estimated cumulative overall reward compared to SEM-UCB. Notably, each region's contribution to the Covid-19 development differs from its overall daily cases of infection due to the existing causal effects amongst the regions. Therefore, the set of regions with the highest contributions is not necessarily the same as the set of regions with the highest total number of daily cases. In addition, in a real-world scenario, the set of regions with the highest contributions might change over time in a non-stationary environment. This is a key aspect of our problem formulation, which NDC-SEM addresses in Fig. \ref{fig:SuperArmSelection}. 

\section{Conclusion} 
\label{sec:Conclusion}
%
    In this paper, we introduced the NDC bandit framework that addresses real-world problems where the feedback is delayed, the environment is non-stationary, and the base arm's rewards are causally related. We developed a decision-making policy, namely NDC-SEM, that learns the causal relationships using the delayed feedback and alleviates the effects of changes in non-stationary environments by discounting distant past rewards. We analyzed NDC-SEM theoretically and showed that it outperforms several state-of-the-art bandit algorithms. 
    %
    We employed our proposed framework to detect the regions that contribute the most to the spread of Covid-19 within Italy. 
    %
    
    Beside the Covid-19 problem, our method can be applied to analyze gene regulatory networks, financial networks, or even artificial neural networks in online settings.
    %
    The first future research direction would be to extend the current framework by considering confounding variables. Another potential extension of our work would be to consider the contextual version of the NDC bandit problem, where the rewards of each base arm depend on a given context vector.
\section{Appendix}
\label{sec:App}
%
%
\subsection{Notations}
\label{sec:Notation}
Before proceeding to the proof, in the following we introduce some important notations together with their definitions.

For any positive $T$, we define $\Gamma(\gamma)$ as 
\begin{align}
\Gamma(\gamma) = {\Big\{} t \in \{N+1, \dots, T\} ~{\Big|}~ 
\boldsymbol{\beta}_{s}[j] = \boldsymbol{\beta}_{t}[j], \forall j \in [N], \forall s \hspace{1mm} \text{s.t.} \hspace{1mm} t - J(\gamma) < s \leq t {\Big\}},
\end{align}
%
where
\begin{align}
    J(\gamma) = \frac{ \log{( (1-\gamma) \xi (s+1) \log{m_{N}^{\gamma}} )} }{ \log{\gamma} }.
\end{align}
%

Let $\mathcal{I}(\mathbf{x}) = \left\{ i \in [N] ~|~ \mathbf{x}[i] \neq 0 \right\}$ denote the \textit{index set} for a decision vector $\mathbf{x} \in \mathcal{X}$. For each base arm $i$ at time $t$, we define $\mathbf{C}_{t}[i] = 2 \sqrt{\frac{\xi (s+1) \log{m_{t}^{\gamma}}}{\mathbf{M}_{t}^{\gamma,D}[i]}}$. At each time $t$, we collect the computed values of $\hat{\boldsymbol{\beta}}_{t}[i]$ and $\mathbf{C}_{t}[i]$ for all base arms $i \in [N]$ in vectors $\hat{\boldsymbol{\beta}}_{t}$ and $\mathbf{C}_{t}$, respectively. Therefore, based on the definition of UCB indices in (13), we have $\mathbf{E}_{t} = \hat{\boldsymbol{\beta}}_{t} + \mathbf{C}_{t}$. For ease of presentation, in the sequel, we use the following equivalence ${\bf 1}^{\top} (\mathbf{I} - \hat{\mathbf{A}}_{t-1})^{-1} \text{diag}(\mathbf{E}_{t-1}) \mathbf{x}_{t} = {\bf 1}^{\top} (\mathbf{I} - \hat{\mathbf{A}}_{t-1})^{-1} \text{diag}(\mathbf{x}_{t}) \mathbf{E}_{t-1}$. At each time $t$, we define the \textit{selection index} for a decision vector $\mathbf{x} \in \mathcal{X}$ as $I_{t}(\mathbf{x}) = \mathbf{1}^{\top}({\bf I} - \hat{\mathbf{A}}_{t-1})^{-1} \textup{diag}(\mathbf{x}) \mathbf{E}_{t-1}$. To simplify the notation, sometimes we drop the time index $t$ in $\mathbf{M}_{t}^{\gamma,D}[i]$ and use $\mathbf{M}^{\gamma,D}[i]$ to denote the discounted number of times that the base arm $i$ has been observed up to the current time instance minus delay.

For each base arm $i \in [N]$, we define a counter $\mathscr{T}_{i}(t)$ which is updated as follows. At each time $t$ that a suboptimal decision vector $\mathbf{x}_{t}$ is selected, we have at least one base arm $i \in [N]$ such that $i = \underset{i \in \mathcal{I}(\mathbf{x}_{t})}{\textup{argmin}} ~\mathbf{M}_{t-1}^{\gamma,D}[i]$. In this case, if the base arm $i$ is unique, we increment $\mathscr{T}_{i}(t)$ by 1. If there is more than one such base arm, we break the tie and select one of them arbitrarily to increment its corresponding counter. Finally, by $\mathbbm{I}_{i}(t)$, we denote the indicator function which is equal to $1$ if $\mathscr{T}_{i}(t)$ is increased by $1$ at time $t$, and is $0$ otherwise.
%
\subsection{Main Results}
\label{app:mainresults}
We use the following lemma in the proof of Theorem \ref{thm:1}. 
\begin{lemma}{}
\label{lem:CounterBound}
For any $i \in [N]$ and any integers $W, D > 0$, let $\mathbf{M}_{t-W:t-D}[i] = \sum\limits_{\tau = t-W+1}^{t-D} \hspace{-1.1mm} \mathbbm{1} \left\{ \mathbbm{I}_{i}(\tau) = 1\right\}$, where $\mathbbm{I}_{i}(t)$ is the indicator function defined above. Then, for any $\ell > 0$,
\begin{align}
\label{eq:lemma1hyp}
    \sum_{t = N+1}^{T} \mathbbm{1} \left\{ \mathbbm{I}_{i}(t) = 1~\&~\mathbf{M}_{t-1}^{\gamma,D}[i] < \ell \right\} \leq \lceil{ \frac{T}{W} \rceil} (\ell \gamma^{-W} + D). 
\end{align}
\end{lemma}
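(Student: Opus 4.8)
The plan is to prove the bound by a block decomposition of the horizon, in the spirit of discounted-UCB analyses. First I would partition the set of rounds $\{N+1,\dots,T\}$ into at most $\lceil T/W \rceil$ consecutive blocks, each of length $W$. It then suffices to show that inside any single block the number of rounds $t$ with $\mathbbm{I}_{i}(t)=1$ and $\mathbf{M}_{t-1}^{\gamma,D}[i] < \ell$ is at most $\ell\gamma^{-W}+D$; summing this per-block estimate over the $\lceil T/W\rceil$ blocks immediately yields the claimed inequality.

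For the per-block estimate I would rely on two elementary facts. First, whenever $\mathbbm{I}_{i}(t)=1$ the counter $\mathscr{T}_{i}$ is incremented, which by its construction requires $i=\operatorname{argmin}_{i\in\mathcal{I}(\mathbf{x}_{t})}\mathbf{M}_{t-1}^{\gamma,D}[i]$ and in particular $i\in\mathcal{I}(\mathbf{x}_{t})$, i.e. $\mathbf{x}_{t}[i]=1$. Hence every round contributing to the sum is an \emph{actual pull} of base arm $i$ and therefore feeds the discounted count $\mathbf{M}_{\cdot}^{\gamma,D}[i]$ at all later rounds, once the delay $D$ has elapsed. Second, since any two rounds lying in a common block differ by strictly less than $W$, a pull of arm $i$ at a round $\tau$ of the block contributes weight $\gamma^{t-1-\tau}\ge\gamma^{W}$ to $\mathbf{M}_{t-1}^{\gamma,D}[i]$ for every later round $t$ of the same block with $\tau\le t-1-D$.

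Concretely, fix a block and let $t^{\ast}$ be the last round in it satisfying both conditions, with $m$ the total number of such rounds in the block. All $m$ rounds are pulls of arm $i$ lying in an interval of length $<W$ ending at $t^{\ast}$. Those occurring more than $D$ rounds before $t^{\ast}$ are already visible in $\mathbf{M}_{t^{\ast}-1}^{\gamma,D}[i]$ and each contributes at least $\gamma^{W}$, while at most $D$ of the $m$ rounds can fall in the final window that the delay still hides from the count. Combining these with $\mathbf{M}_{t^{\ast}-1}^{\gamma,D}[i]<\ell$ gives
\[
\ell \ >\ \mathbf{M}_{t^{\ast}-1}^{\gamma,D}[i] \ \ge\ \gamma^{W}\,(m-D),
\]
so that $m\le \ell\gamma^{-W}+D$, which is exactly the per-block bound; summing over the $\lceil T/W\rceil$ blocks completes the argument.

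The main obstacle is the interaction between the delay and the discounting. The most recent increments are invisible to $\mathbf{M}_{t-1}^{\gamma,D}[i]$ because of the lag $D$, so they cannot be controlled through the smallness of the discounted count and must be absorbed crudely into the additive $D$ term; the earlier increments, by contrast, are controlled precisely because within one block the discount never shrinks their contribution below $\gamma^{W}$. Making the two regimes combine cleanly, and in particular pinning down the additive constant as exactly $D$ rather than $D+1$, requires careful index bookkeeping around the boundary $\tau=t^{\ast}-1-D$ and around the block edges, which is the only delicate point of the proof.
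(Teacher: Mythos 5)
Your argument is correct and follows essentially the same route as the paper's proof: partition the horizon into $\lceil T/W\rceil$ blocks of length $W$, locate the last qualifying round in each block, absorb the at most $D$ rounds hidden by the delay into the additive term, and use the fact that within a block every visible pull contributes at least $\gamma^{W}$ to the discounted count, yielding the per-block bound $\ell\gamma^{-W}+D$. The only difference is presentational: the paper first establishes the undiscounted per-block bound $\ell+D$ and then converts via $\mathbf{M}_{t-1}^{\gamma,D}[i]\geq \gamma^{W}\mathbf{M}_{t-W:t-D}[i]$, whereas you merge the two steps into one inequality.
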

\begin{proof}
First, we prove that
\begin{align}
\label{eq:ProofforUndiscountedSum}
    \sum_{t = N+1}^{T} \mathbbm{1} \left\{ \mathbbm{I}_{i}(t) = 1~\&~\mathbf{M}_{t-W:t-D}[i] < \ell \right\} \leq \lceil{ \frac{T}{W} \rceil} (\ell + D). 
\end{align}
We have
\begin{align} \nonumber
    \sum_{t = N+1}^{T} 
    \mathbbm{1} &\left\{ \mathbbm{I}_{i}(t) = 1~\&~\mathbf{M}_{t-W:t-D}[i] < \ell \right\} \\
    &\leq \sum_{\tau = 1}^{\lceil{ T/W \rceil}} \sum_{t = (\tau - 1)W + 1}^{\tau W} \mathbbm{1} \left\{ \mathbbm{I}_{i}(t)~\&~\mathbf{M}_{t-W:t-D}[i] < \ell \right\}.
\end{align}
%
For any $\tau \in \{1, \dots, \lceil{ \frac{T}{W} \rceil}\}$, either $\sum\limits_{t = (\tau - 1)W + 1}^{\tau W} \mathbbm{1} \left\{ \mathbbm{I}_{i}(t) = 1~\&~\mathbf{M}_{t-W:t-D}[i] < \ell \right\} = 0$, or there exists a time point $t \in \{(\tau - 1)W + 1, \dots, \tau W\}$ such that $\mathbbm{I}_{i}(t) = 1$ and $\mathbf{M}_{t-W:t-D}[i] < \ell$. In such case, let $t_{\tau} = \max\{ t \in \{(\tau - 1)W + 1, \dots, \tau W\}~|~\mathbbm{I}_{i}(t) = 1~\&~\mathbf{M}_{t-W:t-D}[i] < \ell\}$. Therefore,
\begin{align} \nonumber
\label{eq:lemma-middlestep}
    \sum_{t = (\tau - 1)W + 1}^{\tau W} &\mathbbm{1} \left\{ \mathbbm{I}_{i}(t) = 1~\&~\mathbf{M}_{t-W:t-D}[i] < \ell \right\} \\ \nonumber
    &= \sum_{t = (\tau - 1)W + 1}^{t_{\tau}} \mathbbm{1} \left\{ \mathbbm{I}_{i}(t) = 1~\&~\mathbf{M}_{t-W:t-D}[i] < \ell \right\} \\ \nonumber
    &\leq \sum_{t = t_{\tau} - W + 1}^{t_{\tau}} \mathbbm{1} \left\{ \mathbbm{I}_{i}(t) = 1~\&~\mathbf{M}_{t-W:t-D}[i] < \ell \right\} \\ 
    &\leq \sum_{t = t_{\tau} - W + 1}^{t_{\tau}} \mathbbm{1} \left\{ \mathbbm{I}_{i}(t) = 1 \right\}
    \leq \mathbf{M}_{t_{\tau}-W:t_{\tau}-D}[i] + D
    < \ell + D.
\end{align}
Therefore, we prove (\ref{eq:ProofforUndiscountedSum}). We conclude the proof of lemma using the following observation.
\begin{align} \nonumber
\label{eq:lemma-laststep}
    \sum_{t = N+1}^{T} 
    &\mathbbm{1} \left\{ \mathbbm{I}_{i}(t) = 1~\&~\mathbf{M}_{t-1}^{\gamma,D}[i] < \ell \right\} \\
    &\leq \sum_{t = N+1}^{T} \mathbbm{1} \left\{ \mathbbm{I}_{i}(t) = 1~\&~\mathbf{M}_{t-W:t-D}[i] < \ell \gamma^{-W} \right\}.
\end{align}
\end{proof}
\begin{corollary}
\label{cor:RandomDelay}
In the specific case where the delay is a random variable given by $D_{t} \leq D_{\max}$, for any $t \in [T]$, and $D_{\max} = \max_{t \in [T]} D_{t}$, Lemma \ref{lem:CounterBound} can be rewritten as
\begin{align}
\label{eq:cor1hyp}
    \sum_{t = N+1}^{T} \mathbbm{1} \left\{ \mathbbm{I}_{i}(t) = 1~\&~\mathbf{M}_{t-1}^{\gamma,D_{t}}[i] < \ell \right\} \leq  \lceil{ \frac{T}{W} \rceil} (\ell \gamma^{-W} + D_{\max}). 
\end{align}
\begin{proof}
When delay is random, we have $\mathbf{M}_{t}^{\gamma,D_{t}}[i] = \sum_{\tau = 1}^{t-D_{t}} \gamma^{t - \tau} \mathbbm{1}\left\{ \mathbf{x}_{\tau}[i] = 1\right\}$ and $\mathbf{M}_{t-W:t-D_{t}}[i] = \sum_{\tau = t-W+1}^{t-D_{t}} \mathbbm{1} \left\{ \mathbbm{I}_{i}(\tau) = 1\right\}$. In addition, for any $i \in [N]$, and any $t \in [T]$, we have $\mathbf{M}_{t-W:t-D_{\max}}[i] \leq \mathbf{M}_{t-W:t-D_{t}}[i]$. Therefore, we can rewrite (\ref{eq:lemma-middlestep}) as
\begin{align} \nonumber
    \sum_{t = (\tau - 1)W + 1}^{\tau W} &\mathbbm{1} \left\{ \mathbbm{I}_{i}(t) = 1~\&~\mathbf{M}_{t-W:t-D}[i] < \ell \right\} \\ \nonumber
    &\leq \sum_{t = t_{\tau} - W + 1}^{t_{\tau}} \mathbbm{1} \left\{ \mathbbm{I}_{i}(t) = 1~\&~\mathbf{M}_{t-W:t-D_{t}}[i] < \ell \right\} \\ \nonumber
    &\leq \sum_{t = t_{\tau} - W + 1}^{t_{\tau}} \mathbbm{1} \left\{ \mathbbm{I}_{i}(t) = 1~\&~\mathbf{M}_{t-W:t-D_{\max}}[i] < \ell \right\} \\ 
    &\leq \sum_{t = t_{\tau} - W + 1}^{t_{\tau}} \mathbbm{1} \left\{ \mathbbm{I}_{i}(t) = 1 \right\}
    \leq \mathbf{M}_{t_{\tau}-W:t_{\tau}-D_{\max}}[i] + D_{\max} 
    < \ell + D_{max}.
\end{align}
We conclude the proof by observing that (\ref{eq:lemma-laststep}) holds for $D = D_{\max}$.
\end{proof}
\end{corollary}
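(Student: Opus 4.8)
The plan is to mirror the two-part argument of Lemma~\ref{lem:CounterBound}, modifying only the handful of steps that actually feel the delay. First I would restate the discounted and windowed counters with a time-varying delay, writing $\mathbf{M}_{t}^{\gamma,D_{t}}[i] = \sum_{\tau=1}^{t-D_{t}} \gamma^{t-\tau} \mathbbm{1}\{\mathbf{x}_{\tau}[i]=1\}$ and $\mathbf{M}_{t-W:t-D_{t}}[i] = \sum_{\tau=t-W+1}^{t-D_{t}} \mathbbm{1}\{\mathbbm{I}_{i}(\tau)=1\}$. As in the lemma, I would first establish the undiscounted windowed bound and only then transfer it to the discounted count, using the same discounted-to-undiscounted inclusion that closes the lemma's proof, now instantiated at $D=D_{\max}$.

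The single new ingredient is a monotonicity observation. Since $D_{t} \le D_{\max}$ for every $t$, the window $[t-W+1,\,t-D_{\max}]$ is contained in $[t-W+1,\,t-D_{t}]$, so $\mathbf{M}_{t-W:t-D_{\max}}[i] \le \mathbf{M}_{t-W:t-D_{t}}[i]$ for all $i$ and $t$. Consequently $\{\mathbf{M}_{t-W:t-D_{t}}[i] < \ell\} \subseteq \{\mathbf{M}_{t-W:t-D_{\max}}[i] < \ell\}$, which lets me replace the random, $t$-dependent window by the deterministic worst-case window while only enlarging the set of counted rounds.

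With this substitution in hand I would rerun the block argument essentially verbatim. Partition $\{N+1,\dots,T\}$ into $\lceil T/W\rceil$ consecutive blocks of length $W$; inside a block, take $t_{\tau}$ to be the latest round at which the event holds, bound the block sum by the number of rounds with $\mathbbm{I}_{i}(t)=1$ in the length-$W$ window ending at $t_{\tau}$, peel off the final $D_{\max}$ indices (each contributing at most one), and bound the remainder by $\mathbf{M}_{t_{\tau}-W:t_{\tau}-D_{\max}}[i] < \ell$. This yields a per-block bound of $\ell + D_{\max}$; summing over the $\lceil T/W\rceil$ blocks and then applying the discounted-to-undiscounted passage with $\ell$ replaced by $\ell\gamma^{-W}$ gives the claimed $\lceil T/W\rceil(\ell\gamma^{-W}+D_{\max})$.

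The step requiring care is precisely this worst-case reduction: because the window length now varies with $t$, Lemma~\ref{lem:CounterBound} cannot be invoked as a black box, and it is the monotonicity inequality that reinstates a fixed window so the telescoping-within-a-block estimate carries over unchanged with $D$ replaced by $D_{\max}$. Everything else, including the observation that $\gamma^{t-1-\tau}\ge\gamma^{W}$ on the relevant window and that $\mathbbm{I}_{i}(\tau)=1$ forces $\mathbf{x}_{\tau}[i]=1$, is insensitive to whether the delay is constant or random, so no further adjustment is needed.
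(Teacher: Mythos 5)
Your proposal is correct and follows essentially the same route as the paper: the same monotonicity observation $\mathbf{M}_{t-W:t-D_{\max}}[i] \le \mathbf{M}_{t-W:t-D_{t}}[i]$ (hence the event inclusion that replaces the random window by the worst-case one), the same rerun of the block argument with $D_{\max}$ in place of $D$, and the same closing step applying the discounted-to-undiscounted passage at $D=D_{\max}$.
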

\subsubsection{Proof of Theorem \ref{thm:1}} 
\label{subsec:ProofThm1}
\begin{proof}
%
%
%
We rewrite the expected regret as
\begin{align}
    \mathcal{R}_{T}(\mathcal{X}) 
    = \sum_{t = 1}^{T} \left[\mu_{t}(\mathbf{x}_{t}^{\ast}) - \mu_{t}(\mathbf{x}_{t})\right]
    = \mathbb{E} \left[\sum_{t=1}^{T}
    \Delta_{t}(\mathbf{x}_{t}) \mathbbm{1}\{\mathbf{x}_{t} \neq \mathbf{x}_{t}^{\ast}\}\right]  \stackrel{(\ast)}{\leq} \Delta_{\max} \mathbb{E} \left[\sum_{t=1}^{T} \mathbbm{1}\{\mathbf{x}_{t} \neq \mathbf{x}_{t}^{\ast}\}\right],
\end{align}
where $(\ast)$ follows from the definition of $\Delta_{\max}$. 

Based on the definition of the counters $\mathscr{T}_{i}(t)$ for the base arms $i \in [N]$, at each time $t$ that a suboptimal decision vector is selected, only one of such counters is incremented by $1$. Thus, we have \cite{Gai12:CNO}
\begin{align}
    \mathbb{E} \left[ \sum_{t=1}^{T} \mathbbm{1}\{\mathbf{x}_{t} \neq \mathbf{x}_{t}^{\ast}\} \right]
    = \mathbb{E} \left[\sum_{i = 1}^{N} \mathscr{T}_{i}(t)\right] = \sum_{i = 1}^{N} \mathbb{E} \left[\mathscr{T}_{i}(t)\right].
\end{align}
%
%
Therefore, we observe that
\begin{align}
    \mathcal{R}_{T}(\mathcal{X})
    \leq \Delta_{\max} \mathbb{E} \left[\sum_{t=1}^{T} \mathbbm{1}\{\mathbf{x}_{t} \neq \mathbf{x}_{t}^{\ast}\}\right] = \Delta_{\max} \sum_{i = 1}^{N} \mathbb{E} [\mathscr{T}_{i}(T)].
\end{align}

Recall that $\mathbbm{I}_{i}(t)$ is the indicator function which is equal to $1$ if $\mathscr{T}_{i}(t)$ is increased by $1$ at time $t$, and is $0$ otherwise. Hence,
\begin{align}
    \mathscr{T}_{i}(T) = \sum_{t = N+1}^{T} \mathbbm{1}\left\{ \mathbbm{I}_{i}(t) = 1 \right\}.
\end{align}
If $\mathbbm{I}_{i}(t) = 1$, it means that a suboptimal decision vector $\mathbf{x}_{t}$ is selected at time $t$. In this case, $\mathbf{M}_{t-1}^{\gamma,D}[i] = \min \left\{ \mathbf{M}_{t-1}^{\gamma,D}[j] | j \in \mathcal{I}(\mathbf{x}_{t}) \right\}$. Let $\ell = \left \lceil {\frac{16 \xi (s+1) \log{m_{T}^{\gamma}}}{ (\frac{\Delta_{\min}}{s w_{\max}})^{2} }} \right \rceil$. Then,
%
\begin{align} \nonumber
   \mathscr{T}_{i}(T) 
   &= \sum_{t = N+1}^{T} \mathbbm{1}\left\{ \mathbbm{I}_{i}(t) = 1 \right\} \\ \nonumber
   &\leq 1 + \sum_{t = N+1}^{T} \mathbbm{1}\left\{ \mathbbm{I}_{i}(t) = 1 ~\&~ \mathbf{M}_{t-1}^{\gamma,D}[i] < \ell \right\} 
   + \sum_{t = N+1}^{T} \mathbbm{1}\left\{ \mathbbm{I}_{i}(t) = 1 ~\&~ \mathbf{M}_{t-1}^{\gamma,D}[i] \geq \ell \right\}   \\ \nonumber
   &\stackrel{(\ast)}{\leq} 1 +  \lceil{T(1-\gamma)\rceil}(\ell \gamma^{-\frac{1}{1-\gamma}} + D) + J(\gamma) \Upsilon_{T} 
   + \sum_{t \in \Gamma(\gamma)}^{} \mathbbm{1}\left\{ I_{t}(\mathbf{x}_{t}^{\ast}) \leq I_{t}(\mathbf{x}_{t}) ~\&~ \mathbf{M}_{t-1}^{\gamma,D}[i] \geq \ell \right \} \\ \nonumber 
   &= 1 + J(\gamma) \Upsilon_{T} +  \lceil{T(1-\gamma)\rceil}(\ell \gamma^{-\frac{1}{1-\gamma}} + D) \\ \nonumber
   &\hspace{16mm}+\sum_{t \in \Gamma(\gamma)}^{} \mathbbm{1} {\Big\{} {\bf 1}^{\top} (\mathbf{I} - \hat{\mathbf{A}}_{t - 1})^{-1} \text{diag}(\mathbf{x}_{t}^{\ast}) \mathbf{E}_{t-1} \\
   &\hspace{50mm}\leq {\bf 1}^{\top} (\mathbf{I} - \hat{\mathbf{A}}_{t - 1})^{-1} \text{diag}(\mathbf{x}_{t}) \mathbf{E}_{t-1} 
   ~\&~ \mathbf{M}_{t-1}^{\gamma,D}[i] \geq \ell {\Big\}},
\end{align}
where $(\ast)$ follows from Lemma \ref{lem:CounterBound} by choosing $W = \frac{1}{1-\gamma}$.

Note that, when $\mathscr{T}_{i}(t) $ is incremented by $1$ at time $t$ and $\mathbf{M}_{t-1}^{\gamma,D}[i] \geq \ell$, the following holds.
\begin{align}
    \ell \leq \mathbf{M}_{t-1}^{\gamma,D}[i] \leq \mathbf{M}_{t-1}^{\gamma,D}[j], ~~~~ \forall j \in \mathcal{I}(\mathbf{x}_{t}).
\end{align}
Let $\mathbf{v}_{t}^{\top} = {\bf 1}^{\top} (\mathbf{I} - \hat{\mathbf{A}}_{t-1})^{-1} \text{diag}(\mathbf{x}_{t}^{\ast})$ and $\mathbf{u}_{t}^{\top} = {\bf 1}^{\top} (\mathbf{I} - \hat{\mathbf{A}}_{t-1})^{-1} \text{diag}(\mathbf{x}_{t})$. We order the elements in sets $\mathcal{I}(\mathbf{x}_{t}^{\ast})$ and $\mathcal{I}(\mathbf{x}_{t})$ arbitrarily. In the following, our results are independent of the way we order these sets. Let $v_{k}$, $k = 1, \dots, |\mathcal{I}(\mathbf{x}_{t}^{\ast})| \leq s$, represent the $k$th element in $\mathcal{I}(\mathbf{x}_{t}^{\ast})$ and $u_{k}$, $k = 1, \dots, |\mathcal{I}(\mathbf{x}_{t})| \leq s$, represent the $k$th element in $\mathcal{I}(\mathbf{x}_{t})$. Hence, we have
\begin{align} \nonumber
\label{eq:event}
   \mathscr{T}_{i}(T)
   &\leq 1 + J(\gamma) \Upsilon_{T} +  \lceil{T(1-\gamma)\rceil}(\ell \gamma^{-\frac{1}{1-\gamma}} + D) \\ \nonumber
   &+\sum_{t \in \Gamma(\gamma)}^{} \mathbbm{1} {\Bigg\{} \min_{0 < \mathbf{M}^{\gamma,D}[v_{1}], \dots, \mathbf{M}^{\gamma,D}[v_{|\mathcal{I}(\mathbf{x}_{t}^{\ast})|}] \leq t} 
   \sum_{j=1}^{|\mathcal{I}(\mathbf{x}_{t}^{\ast})|} \mathbf{v}_{t}^{\top}[v_{j}] (\hat{\boldsymbol{\beta}}_{t-1}[v_{j}] + \mathbf{C}_{t-1}[v_{j}]) \leq \\ \nonumber
   &\hspace{35mm}\max_{\ell \leq \mathbf{M}^{\gamma,D}[u_{1}], \dots,
   \mathbf{M}^{\gamma,D}[u_{|\mathcal{I}(\mathbf{x}_{t})|}] \leq t}
   \sum_{j=1}^{|\mathcal{I}(\mathbf{x}_{t})|} \mathbf{u}_{t}^{\top}[u_{j}] (\hat{\boldsymbol{\beta}}_{t-1}[u_{j}] + \mathbf{C}_{t-1}[u_{j}]) {\Bigg \}} \\ \nonumber
   &\leq 1 + J(\gamma) \Upsilon_{T} +  \lceil{T(1-\gamma)\rceil}(\ell \gamma^{-\frac{1}{1-\gamma}} + D) \\ \nonumber
   &\hspace{5mm}+\sum_{t \in \Gamma(\gamma)}^{}  \sum_{\mathbf{M}^{\gamma,D}[v_{1}]=1}^{t} \dots \sum_{\mathbf{M}^{\gamma,D}[v_{|\mathcal{I}(\mathbf{x}_{t}^{\ast})|}]=1}^{t}
   \sum_{\mathbf{M}^{\gamma,D}[u_{1}]=\ell}^{t} \dots \sum_{\mathbf{M}^{\gamma,D}[u_{|\mathcal{I}(\mathbf{x}_{t})|}]=\ell}^{t} \\ 
   &\hspace{15mm}\mathbbm{1} {\Bigg \{} 
   \sum_{j=1}^{|\mathcal{I}(\mathbf{x}_{t}^{\ast})|} \mathbf{v}_{t}^{\top}[v_{j}] (\hat{\boldsymbol{\beta}}_{t-1}[v_{j}] + \mathbf{C}_{t-1}[v_{j}]) 
   \leq \sum_{j=1}^{|\mathcal{I}(\mathbf{x}_{t})|} \mathbf{u}_{t}^{\top}[u_{j}] (\hat{\boldsymbol{\beta}}_{t-1}[u_{j}] + \mathbf{C}_{t-1}[u_{j}])
   {\Bigg \}}. 
\end{align}
We define the Event $\mathcal{P}$ as
\begin{align}
\label{eq:eventp-separated}
    \sum_{j=1}^{|\mathcal{I}(\mathbf{x}_{t}^{\ast})|} \mathbf{v}_{t}^{\top}&[v_{j}] (\hat{\boldsymbol{\beta}}_{t-1}[v_{j}] + \mathbf{C}_{t-1}[v_{j}])
   \leq \sum_{j=1}^{|\mathcal{I}(\mathbf{x}_{t})|} \mathbf{u}_{t}^{\top}[u_{j}] (\hat{\boldsymbol{\beta}}_{t-1}[u_{j}] + \mathbf{C}_{t-1}[u_{j}]).
\end{align}
Now, for $t \in \Gamma(\gamma)$, if the Event $\mathcal{P}$ in (\ref{eq:eventp-separated}) is true, it implies that at least one of the following events must be true.
\begin{align} \label{eq:part1} 
    &{\bf 1}^{\top} (\mathbf{I} - \hat{\mathbf{A}}_{t-1})^{-1} \text{diag}(\mathbf{x}_{t}^{\ast}) (\hat{\boldsymbol{\beta}}_{t-1} + \mathbf{C}_{t-1})
    \leq {\bf 1}^{\top} (\mathbf{I} - \mathbf{A})^{-1} \text{diag}(\mathbf{x}_{t}^{\ast})\boldsymbol{\beta}_{t-1}, \\
    \label{eq:part2}
    &{\bf 1}^{\top} (\mathbf{I} - \hat{\mathbf{A}}_{t-1})^{-1} \text{diag}(\mathbf{x}_{t}) (\hat{\boldsymbol{\beta}}_{t-1} - \mathbf{C}_{t-1})
    \geq {\bf 1}^{\top} (\mathbf{I} - \mathbf{A})^{-1} \text{diag}(\mathbf{x}_{t})\boldsymbol{\beta}_{t-1}, \\ 
    \label{eq:part3}
    &{\bf 1}^{\top} (\mathbf{I} - \mathbf{A})^{-1} \text{diag}(\mathbf{x}_{t}^{\ast})\boldsymbol{\beta}_{t-1}
    < {\bf 1}^{\top} (\mathbf{I} - \mathbf{A})^{-1} \text{diag}(\mathbf{x}_{t})\boldsymbol{\beta}_{t-1} + 2 {\bf 1}^{\top} (\mathbf{I} - \hat{\mathbf{A}}_{t-1})^{-1} \text{diag}(\mathbf{x}_{t}) \mathbf{C}_{t-1}.
    \end{align}
First, we consider (\ref{eq:part1}). Based on our problem formulation and proposed solution, we know that matrices $\mathbf{A}$ and $\hat{\mathbf{A}}_{t-1}$ are nilpotent with index $N$. Thus, $\mathbf{A}^{N} = \mathbf{0}_{N \times N}$ and $\hat{\mathbf{A}}_{t-1}^{N} = \mathbf{0}_{N \times N}$. Hence, we can write the Taylor's series of $(\mathbf{I} - \mathbf{A})^{-1}$ and $(\mathbf{I} - \hat{\mathbf{A}}_{t-1})^{-1}$ as
\begin{align}
\label{eq:nilpotentA}
    \hspace{-7mm}(\mathbf{I} - \mathbf{A})^{-1} = \mathbf{I} + \mathbf{A} + \mathbf{A}^{2} + \dots + \mathbf{A}^{N-1},
\end{align}
and
\begin{align}
\label{eq:nilpotentAhat}
    (\mathbf{I} - \hat{\mathbf{A}}_{t-1})^{-1} = \mathbf{I} + \hat{\mathbf{A}}_{t-1} + \hat{\mathbf{A}}_{t-1}^{2} + \dots + \hat{\mathbf{A}}_{t-1}^{N-1},
\end{align}
respectively. Substituting (\ref{eq:nilpotentA}) and (\ref{eq:nilpotentAhat}) in (\ref{eq:part1}) results in
\begin{align} 
    {\bf 1}^{\top} (\mathbf{I} + \hat{\mathbf{A}}_{t-1} + \dots + \hat{\mathbf{A}}_{t-1}^{N-1}) &\text{diag}(\mathbf{x}_{t}^{\ast}) (\hat{\boldsymbol{\beta}}_{t-1} + \mathbf{C}_{t-1}) 
    \leq {\bf 1}^{\top} (\mathbf{I} + \mathbf{A} + \dots + \mathbf{A}^{N-1}) \text{diag}(\mathbf{x}_{t}^{\ast}) \boldsymbol{\beta}_{t-1}.
\end{align}
For $j = 1, \dots N$, we find the upper bound for
\begin{align} 
\label{eq:part1-j}
    \mathbb{P} {\Big[} {\bf 1}^{\top} \hat{\mathbf{A}}_{t-1}^{j-1} \text{diag}(\mathbf{x}_{t}^{\ast}) (\hat{\boldsymbol{\beta}}_{t-1} + \mathbf{C}_{t-1}) 
    \leq
    {\bf 1}^{\top} \mathbf{A}^{j-1} \text{diag}(\mathbf{x}_{t}^{\ast}) \boldsymbol{\beta}_{t-1} {\Big]}.
\end{align}
We consider the following Event $\mathcal{E}$.
%
%
%
\begin{align} \nonumber
    {\bf 1}^{\top} \hat{\mathbf{A}}_{t-1}^{j-1} \text{diag}(\mathbf{x}_{t}^{\ast}) (\hat{\boldsymbol{\beta}}_{t-1} + \mathbf{C}_{t-1}) 
    &+ {\bf 1}^{\top} \hat{\mathbf{A}}_{t-1}^{j-1} \text{diag}(\mathbf{x}_{t}^{\ast}) \boldsymbol{\beta}_{t-1} \\
    &\leq {\bf 1}^{\top} \hat{\mathbf{A}}_{t-1}^{j-1} \text{diag}(\mathbf{x}_{t}^{\ast}) \boldsymbol{\beta}_{t-1} + {\bf 1}^{\top} \mathbf{A}^{j-1} \text{diag}(\mathbf{x}_{t}^{\ast}) \boldsymbol{\beta}_{t-1}.
\end{align}
If $\mathcal{E}$ is true, then at least one of the following must hold.
\begin{align} 
    &\underbrace{{\bf 1}^{\top} \hat{\mathbf{A}}_{t-1}^{j-1} \text{diag}(\mathbf{x}_{t}^{\ast}) (\hat{\boldsymbol{\beta}}_{t-1} + \mathbf{C}_{t-1})
    \leq {\bf 1}^{\top} \hat{\mathbf{A}}_{t-1}^{j-1} \text{diag}(\mathbf{x}_{t}^{\ast})\boldsymbol{\beta}_{t-1}}_{\mathcal{I}}, \\
    &\underbrace{{\bf 1}^{\top} \hat{\mathbf{A}}_{t-1}^{j-1} \text{diag}(\mathbf{x}_{t}^{\ast}) \boldsymbol{\beta}_{t-1}
    \leq {\bf 1}^{\top} \mathbf{A}^{j-1} \text{diag}(\mathbf{x}_{t}^{\ast})\boldsymbol{\beta}_{t-1}}_{\mathcal{II}}.
\end{align}
Therefore, we have
\begin{align} 
\label{eq:eventE}
    \mathbb{P} \left[ \mathcal{E} \right] \leq \mathbb{P} \left[ \mathcal{I} \right] + \mathbb{P} \left[ \mathcal{II} \right].
\end{align}
Let $\mathbf{y}_{t}^{\top} = {\bf 1}^{\top} \hat{\mathbf{A}}_{t-1}^{j-1} \text{diag}(\mathbf{x}_{t}^{\ast})$. If Event $\mathcal{I}$ is true, then at least one of the following must hold.
\begin{align} 
    \mathbf{y}_{t}^{\top}[v_{1}] (\hat{\boldsymbol{\beta}}_{t-1}[v_{1}] + \mathbf{C}_{t-1}[v_{1}])
    &\leq \mathbf{y}_{t}^{\top}[v_{1}] \boldsymbol{\beta}_{t-1}[v_{1}], \\
    \mathbf{y}_{t}^{\top}[v_{2}] (\hat{\boldsymbol{\beta}}_{t-1}[v_{2}] + \mathbf{C}_{t-1}[v_{2}])
    &\leq \mathbf{y}_{t}^{\top}[v_{2}] \boldsymbol{\beta}_{t-1}[v_{2}], \\ \nonumber
    &\vdots \\ 
    \mathbf{y}_{t}^{\top}[v_{|\mathcal{I}(\mathbf{x}_{t}^{\ast})|}] (\hat{\boldsymbol{\beta}}_{t-1}[v_{|\mathcal{I}(\mathbf{x}_{t}^{\ast})|}] + \mathbf{C}_{t-1}[v_{|\mathcal{I}(\mathbf{x}_{t}^{\ast})|}]) 
    &\leq \mathbf{y}_{t}^{\top}[v_{|\mathcal{I}(\mathbf{x}_{t}^{\ast})|}] \boldsymbol{\beta}_{t-1}[v_{|\mathcal{I}(\mathbf{x}_{t}^{\ast})|}].
\end{align}
For $k = 1, \dots, |\mathcal{I}(\mathbf{x}_{t}^{\ast})|$, we have
\begin{align} \nonumber
    \mathbb{P}{\Big[} \mathbf{y}_{t}^{\top}[v_{k}] (\hat{\boldsymbol{\beta}}_{t-1}[v_{k}] + \mathbf{C}_{t-1}[v_{k}])
    \leq \mathbf{y}_{t}^{\top}[v_{k}] \boldsymbol{\beta}_{t-1}[v_{k}]{\Big]} 
    &\stackrel{(a)}{=} \mathbb{P}{\Big[} (\hat{\boldsymbol{\beta}}_{t-1}[v_{k}] + \mathbf{C}_{t-1}[v_{k}]) \leq \boldsymbol{\beta}_{t-1}[v_{k}]{\Big]}  \\ \nonumber
    &\stackrel{(b)}{\leq} \left\lceil \frac{\log m_{t}^{\gamma}}{\log{(1+\eta)}} \right\rceil e^{-\left( 2\xi (s+1) \log{(m_{t}^{\gamma})}\left( 1 - \frac{\eta^{2}}{16} \right) \right)} \\ &\stackrel{(c)}{=} \left\lceil \frac{\log m_{t}^{\gamma}}{\log{(1+\eta)}} \right\rceil (m_{t}^{\gamma})^{-2\xi (s+1) \left( 1 - \frac{\eta^{2}}{16} \right)},
\end{align}
where $(a)$ holds since $\mathbf{y}_{t}^{\top}[v_{k}] \geq 0$, $\forall k$ and $(b)$ follows from a small modification of the proof in \cite{Garivier11:OUC} for all $\eta > 0$. Hence, for Event $\mathcal{I}$, we conclude that
\begin{align} 
    \mathbb{P}\left[\mathcal{I}\right] 
    \leq |\mathcal{I}(\mathbf{x}_{t}^{\ast})| \left\lceil \frac{\log m_{t}^{\gamma}}{\log{(1+\eta)}} \right\rceil (m_{t}^{\gamma})^{-2\xi (s+1) \left( 1 - \frac{\eta^{2}}{16} \right)} 
    \leq s \left\lceil \frac{\log m_{t}^{\gamma}}{\log{(1+\eta)}} \right\rceil (m_{t}^{\gamma})^{-2\xi (s+1) \left( 1 - \frac{\eta^{2}}{16} \right)}.
\end{align}
Now, we consider Event $\mathcal{II}$. Based on Theorem 1 in \cite{Bazerque13:IOS}, we know that we can identify the adjacency matrix $A$ uniquely by $N$ samples gathered during the initialization period of our proposed algorithm. This means that with probability $1$, after the time point $\theta = N + D + 1 < \infty$, $\hat{\mathbf{A}}_{t-1} = \mathbf{A}$ holds for all $t > \theta$. Therefore, for $t > N + D + 1$, Event $\mathcal{II}$ holds with probability $1$.

Combining the aforementioned results with (\ref{eq:eventE}), we find the upper bound for (\ref{eq:part1-j}) as
\begin{align} \nonumber
\label{eq:part1-jj}
    \mathbb{P} {\Big[} {\bf 1}^{\top} \hat{\mathbf{A}}_{t-1}^{j-1} \text{diag}(\mathbf{x}_{t}^{\ast}) (\hat{\boldsymbol{\beta}}_{t-1} + \mathbf{C}_{t-1})  
    \leq
    {\bf 1}^{\top} &\mathbf{A}^{j-1} \text{diag}(\mathbf{x}_{t}^{\ast})\boldsymbol{\beta}_{t-1} {\Big]} \\
    &\leq s \left\lceil \frac{\log m_{t}^{\gamma}}{\log{(1+\eta)}} \right\rceil (m_{t}^{\gamma})^{-2\xi (s+1) \left( 1 - \frac{\eta^{2}}{16} \right)},
\end{align}
for each $j = 1, \dots, N$. Since $\hat{\mathbf{A}}_{t-1} = \mathbf{A}$, $\forall t > N + D + 1$ and the length of the longest path in the graph is $p$, we can rewrite (\ref{eq:nilpotentA}) and (\ref{eq:nilpotentAhat}) as \cite{duncan2004powers}
\begin{align}
\label{eq:nilpotentA-p}
    (\mathbf{I} - \mathbf{A})^{-1} = \mathbf{I} + \mathbf{A} + \mathbf{A}^{2} + \dots + \mathbf{A}^{p},
\end{align}
and
\begin{align}
\label{eq:nilpotentAhat-p}
    (\mathbf{I} - \hat{\mathbf{A}}_{t-1})^{-1} = \mathbf{I} + \hat{\mathbf{A}}_{t-1} + \hat{\mathbf{A}}_{t-1}^{2} + \dots + \hat{\mathbf{A}}_{t-1}^{p},
\end{align}
respectively. Therefore, by using (\ref{eq:nilpotentA-p}) and (\ref{eq:nilpotentAhat-p}) in place of (\ref{eq:nilpotentA}) and (\ref{eq:nilpotentAhat}), and based on (\ref{eq:part1-jj}), the following holds for (\ref{eq:part1}).
\begin{align} \nonumber
\label{eq:part1-done}
    \mathbb{P} {\Big[} {\bf 1}^{\top} (\mathbf{I} - \hat{\mathbf{A}}_{t-1})^{-1} \text{diag}(\mathbf{x}_{t}^{\ast}) (\hat{\boldsymbol{\beta}}_{t-1} + \mathbf{C}_{t-1}) 
    \leq {\bf 1}^{\top} &(\mathbf{I} - \mathbf{A})^{-1} \text{diag}(\mathbf{x}_{t}^{\ast}) \boldsymbol{\beta}_{t-1} {\Big]}  \\
    &\leq s^{p} \left\lceil \frac{\log m_{t}^{\gamma}}{\log{(1+\eta)}} \right\rceil^{p} (m_{t}^{\gamma})^{-2 p \xi (s+1) \left( 1 - \frac{\eta^{2}}{16} \right)}.
\end{align}
For (\ref{eq:part2}), we have similar results as follows.
\begin{align} \nonumber
\label{eq:part2-done}
    \mathbb{P} {\Big[}  {\bf 1}^{\top} (\mathbf{I} - \hat{\mathbf{A}}_{t-1})^{-1} \text{diag}(\mathbf{x}_{t}) (\hat{\boldsymbol{\beta}}_{t-1} - \mathbf{C}_{t-1}) 
    \geq {\bf 1}^{\top} &(\mathbf{I} - \mathbf{A})^{-1} \text{diag}(\mathbf{x}_{t})\boldsymbol{\beta}_{t-1} {\Big]}  \\
    &\leq s^{p} \left\lceil \frac{\log m_{t}^{\gamma}}{\log{(1+\eta)}} \right\rceil^{p} (m_{t}^{\gamma})^{-2 p \xi (s+1) \left( 1 - \frac{\eta^{2}}{16} \right)}.
\end{align}

Finally, we consider (\ref{eq:part3}). We have
\begin{align} \nonumber
\label{eq:part3-done}
    &{\bf 1}^{\top} (\mathbf{I} - \mathbf{A})^{-1} \text{diag}(\mathbf{x}_{t}^{\ast})\boldsymbol{\beta}_{t-1}
    - {\bf 1}^{\top} (\mathbf{I} - \mathbf{A})^{-1} \text{diag}(\mathbf{x}_{t})\boldsymbol{\beta}_{t-1}  
    - 2 {\bf 1}^{\top} (\mathbf{I} - \hat{\mathbf{A}}_{t-1})^{-1} \text{diag}(\mathbf{x}_{t}) \mathbf{C}_{t-1} \\ \nonumber
    &\stackrel{(a)}{=} {\bf 1}^{\top} (\mathbf{I} - \mathbf{A})^{-1} \text{diag}(\mathbf{x}_{t}^{\ast})\boldsymbol{\beta}_{t-1}
    - {\bf 1}^{\top} (\mathbf{I} - \mathbf{A})^{-1} \text{diag}(\mathbf{x}_{t})\boldsymbol{\beta}_{t-1}  
    - 2 \sum_{j: j \in \mathcal{I}(\mathbf{x}_{t})}^{} \mathbf{w}_{t}^{\top}[j] \mathbf{C}_{t-1}[j] \\ \nonumber
    &\stackrel{(b)}{=} {\bf 1}^{\top} (\mathbf{I} - \mathbf{A})^{-1} \text{diag}(\mathbf{x}_{t}^{\ast})\boldsymbol{\beta}_{t-1}
    - {\bf 1}^{\top} (\mathbf{I} - \mathbf{A})^{-1} \text{diag}(\mathbf{x}_{t})\boldsymbol{\beta}_{t-1} \\ \nonumber 
    &\hspace{85mm}- 4 \sum_{j: j \in \mathcal{I}(\mathbf{x}_{t})}^{} \mathbf{w}_{t}^{\top}[j] \sqrt{\frac{\xi (s+1) \log{m_{t-1}^{\gamma}}}{\mathbf{M}_{t-1}^{\gamma,D}[j]}} \\ \nonumber
    &\stackrel{(c)}{\geq} {\bf 1}^{\top} (\mathbf{I} - \mathbf{A})^{-1} \text{diag}(\mathbf{x}_{t}^{\ast})\boldsymbol{\beta}_{t-1}
    - {\bf 1}^{\top} (\mathbf{I} - \mathbf{A})^{-1} \text{diag}(\mathbf{x}_{t})\boldsymbol{\beta}_{t-1}  
    - 4 s w_{\max} \sqrt{\frac{\xi (s+1) \log{m_{T}^{\gamma}}}{\ell}} \\ \nonumber
    &\stackrel{(d)}{\geq} {\bf 1}^{\top} (\mathbf{I} - \mathbf{A})^{-1} \text{diag}(\mathbf{x}_{t}^{\ast})\boldsymbol{\beta}_{t-1} - {\bf 1}^{\top} (\mathbf{I} - \mathbf{A})^{-1} \text{diag}(\mathbf{x}_{t})\boldsymbol{\beta}_{t-1} - \Delta_{\min}
    \\ 
    &\stackrel{(e)}{\geq} {\bf 1}^{\top} (\mathbf{I} - \mathbf{A})^{-1} \text{diag}(\mathbf{x}_{t}^{\ast})\boldsymbol{\beta}_{t-1} - {\bf 1}^{\top} (\mathbf{I} - \mathbf{A})^{-1} \text{diag}(\mathbf{x}_{t})\boldsymbol{\beta}_{t-1} - \Delta_{t-1}(\mathbf{x}_{t}) = 0,
\end{align}
where in $(a)$ and $(c)$ we used the definition of $\mathbf{w}_{t}^{\top}$ and $w_{\max}$, respectively. Moreover, in $(b)$ and $(d)$, we substituted the value for $\mathbf{C}_{t-1}[j]$ and $l$, respectively. $(e)$ follows from the definition of $\Delta_{\min}$. Hence, we conclude that (\ref{eq:part3}) never happens.

Since $\xi > \frac{1}{2(s+1)}$, we can choose $\eta = 4 \sqrt{1 - \frac{1}{2\xi(s+1)}}$. By using (\ref{eq:part1-done}), (\ref{eq:part2-done}), and (\ref{eq:part3-done}), we achieve the following.
%
\begin{align} \nonumber
\mathbb{E}[\mathscr{T}_{i}(T)] 
&\stackrel{(\ast)}{\leq} 1 + J(\gamma) \Upsilon_{T} 
+  \lceil{T(1-\gamma)\rceil} \left( \left \lceil {\frac{16 \xi (s+1) \log{m_{T}^{\gamma}}}{ (\frac{\Delta_{\min}}{s w_{\max}})^{2} }} \right \rceil \gamma^{-\frac{1}{1-\gamma}} + D \right) \\ \nonumber
&+ \sum_{t \in \Gamma(\gamma)}^{} {\Bigg[} \sum_{\mathbf{M}^{\gamma,D}[v_{1}]=1}^{\left\lceil\frac{1}{1-\gamma}\right\rceil} \dots \sum_{\mathbf{M}^{\gamma,D}[v_{|\mathcal{I}(\mathbf{x}_{t}^{\ast})|}]=1}^{\left\lceil\frac{1}{1-\gamma}\right\rceil}  
\sum_{\mathbf{M}^{\gamma,D}[u_{1}]=\ell}^{\left\lceil\frac{1}{1-\gamma}\right\rceil} \dots \sum_{\mathbf{M}^{\gamma,D}[u_{|\mathcal{I}(\mathbf{x}_{t})|}]=\ell}^{\left\lceil\frac{1}{1-\gamma}\right\rceil} \\ \nonumber
&\hspace{85mm} 2 s^{p} \left\lceil \frac{\log m_{t}^{\gamma}}{\log{(1+\eta)}} \right\rceil^{p} (m_{t}^{\gamma})^{-p} {\Bigg]} \\ \nonumber
&\leq 1 + J(\gamma) \Upsilon_{T} +  \lceil{T(1-\gamma)\rceil}
\left( \left \lceil {\frac{16 \xi s^{2} w_{\max}^{2} (s+1) \log{m_{T}^{\gamma}}}{ \Delta_{\min}^{2} }} \right \rceil \gamma^{-\frac{1}{1-\gamma}} + D \right) \\ 
&\hspace{55mm}+ 2 s^{p}  \left\lceil\frac{1}{1-\gamma}\right\rceil^{2s} \sum_{t \in \Gamma(\gamma)}^{} 
\left\lceil \frac{\log m_{t}^{\gamma}}{\log{(1+\eta)}} \right\rceil^{p} (m_{t}^{\gamma})^{-p},
\end{align}
where $(\ast)$ follows from $\mathbf{M}_{t}^{\gamma,D}[i] \leq m_{t}^{\gamma} \leq \left\lceil\frac{1}{1-\gamma}\right\rceil$, $\forall i \in [N]$, $\forall t \in [T]$. We can control the sum in the last term as follows. By choosing $k = (1-\gamma)^{-1}$, we have
\begin{align} \nonumber
\sum_{t \in \Gamma(\gamma)}^{} 
\left\lceil \frac{\log m_{t}^{\gamma}}{\log{(1+\eta)}} \right\rceil^{p} (m_{t}^{\gamma})^{-p} 
&\leq k + \sum_{t = k}^{T} \left\lceil \frac{\log m_{k}^{\gamma}}{\log{(1+\eta)}} \right\rceil^{p} (m_{k}^{\gamma})^{-p} \\ \nonumber
&\leq k + \left\lceil \frac{\log m_{k}^{\gamma}}{\log{(1+\eta)}} \right\rceil^{p} \frac{T}{(m_{k}^{\gamma})^{p}} \\
&\leq \frac{1}{1-\gamma} + \left\lceil \frac{\log \frac{1}{1-\gamma}}{\log{(1+\eta)}} \right\rceil^{p} \frac{T (1-\gamma)^{p}}{(1-\gamma^{\frac{1}{1-\gamma}})^{p}}.
\end{align}
Hence, the expected regret is upper bounded as
\begin{align} \nonumber
    \mathcal{R}_{T}(\mathcal{X})
    &\leq \Delta_{\max} \sum_{i = 1}^{N} \mathbb{E} [\mathscr{T}_{i}(T)] \\ \nonumber
    &\leq 
    {\Bigg[} 1 + J(\gamma) \Upsilon_{T} +  \lceil{T(1-\gamma)\rceil} 
    \left( \left \lceil {\frac{16 \xi s^{2} w_{\max}^{2} (s+1) \log{m_{T}^{\gamma}}}{ \Delta_{\min}^{2} }} \right \rceil \gamma^{-\frac{1}{1-\gamma}} + D \right) 
    \\ 
    &\hspace{25mm}+ 2 s^{p} \left\lceil\frac{1}{1-\gamma}\right\rceil^{2s} 
    \left( \frac{1}{1-\gamma}
    + \left\lceil \frac{\log \frac{1}{1-\gamma}}{\log{(1+\eta)}} \right\rceil^{p} \frac{T (1-\gamma)^{p}}{(1-\gamma^{\frac{1}{1-\gamma}})^{p}} \right) {\Bigg]} N \Delta_{\max}.
\end{align}
%
\end{proof}

\subsection{Additional Information and Experiments regarding Synthetic Dataset}
\label{app:AddInfo-Synthetic}
%
\subsubsection{Expected Instantaneous Rewards}
\label{App:ExpInstRew}
%
In \textbf{Fig. \ref{fig:MeanRewards-Synthtic}}, we depict the changes in the expected instantaneous reward over time for each base arm in our synthetic dataset. As we see, there are $3$ change points where the expected instantaneous reward of at least one base arm changes abruptly.
\begin{figure}[t!]
    \centering
    \includegraphics[width=0.7\textwidth]{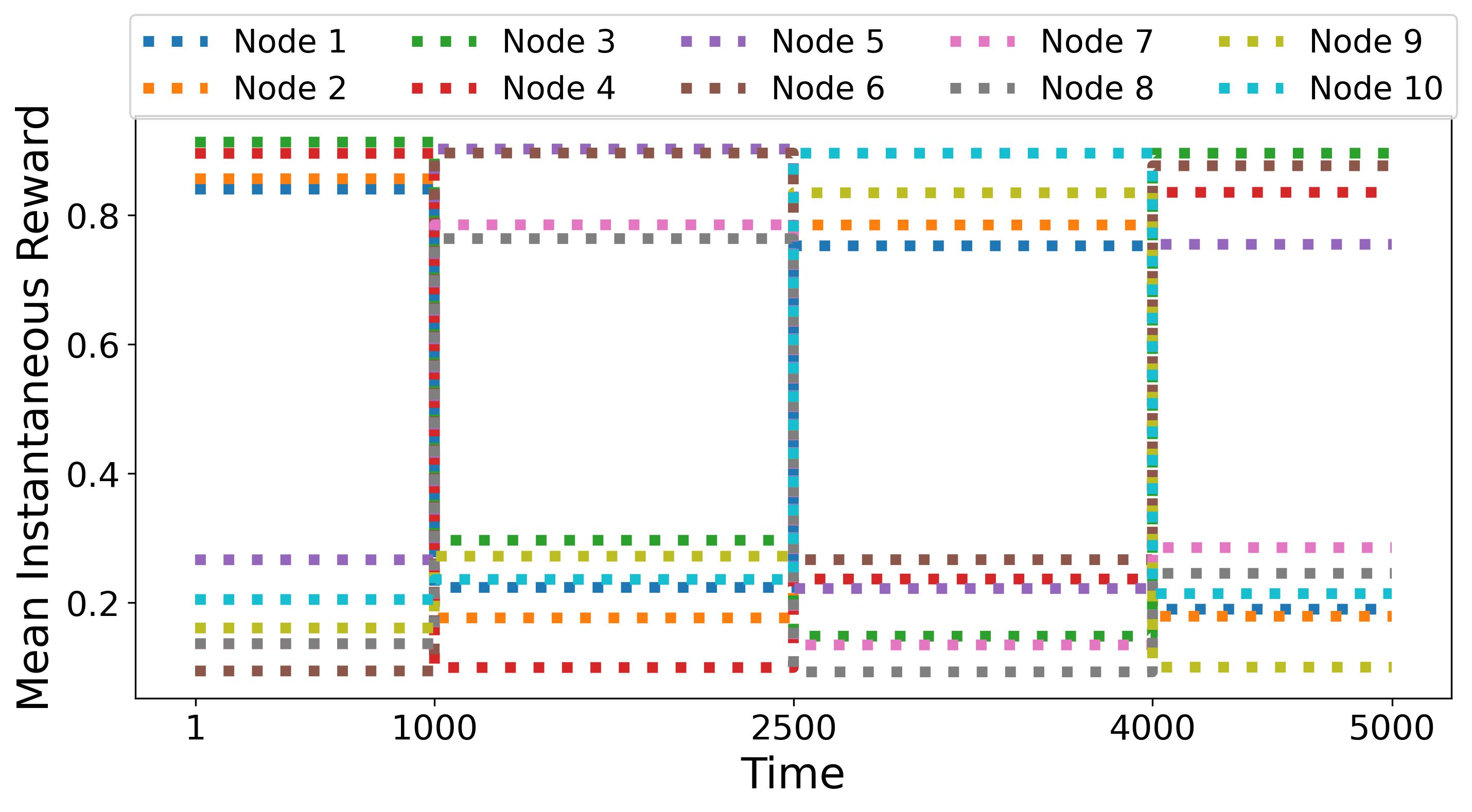}
    \caption{Evolution of the base arms' expected instantaneous reward for the synthetic experiment.}
    \label{fig:MeanRewards-Synthtic}
\end{figure}

\subsubsection{Adaptation to the Changes in the Environment}
\label{App:AdaptationtoChanges}
To further analyze the performance of our algorithm in our experiments using synthetic data, we define the \textit{optimality ratio} for the model during each stationary period. Let $\mathcal{I}(\mathbf{x}) = \left\{ i \in [N] ~|~ \mathbf{x}[i] \neq 0 \right\}$ be the \textit{index set} of a decision vector $\mathbf{x} \in \mathcal{X}$. For the $i$-th stationary period $T_{i} \subseteq [T]$, the optimality ratio of a given policy is calculated as $(\sum_{t \in T_{i}}^{} \sum_{i \in \mathcal{I}(\mathbf{x}_{t})}^{} \mathbbm{1}\{i \in \mathcal{I}(\mathbf{x}_{t}^{\ast})\}) / (\sum_{t \in T_{i}}^{} |\mathcal{I}(\mathbf{x}_{t}^{\ast})|)$. In words, the optimality ratio of a given policy for each stationary period is the ratio of the number of selected base arms by that policy that belong to the optimal super arm in that stationary period over the number of selected base arms by oracle during that stationary period.

\textbf{Fig. \ref{fig:Opt-Ratio}} shows the optimality ratio of the agent over different stationary periods by following the NDC-SEM and SEM-UCB policies. We can observe that our algorithm closely follows the super arm choice pattern of the oracle, which means that it can quickly adapt to changes in the environment. On the other hand, SEM-UCB cannot always adapt to sudden changes in the environment. We particularly consider SEM-UCB in this analysis to show that, although SEM-UCB learns the structural dependencies in the network, it fails in learning the optimal decision vector in the presence of delay and non-stationarity. 
\begin{figure}[t!]
    \centering
     \begin{subfigure}[t]{0.6\textwidth}
         \centering
         \includegraphics[width=1\textwidth]{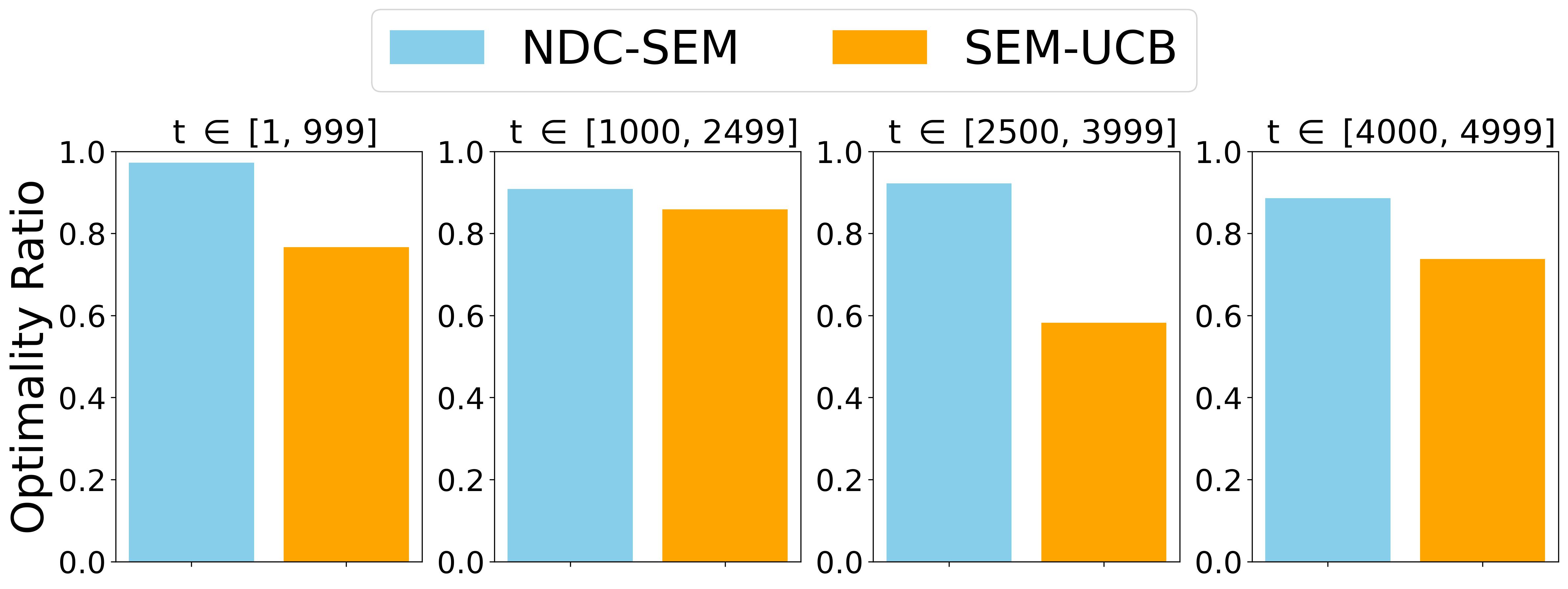}
         \label{subfig:optimality1}
     \end{subfigure}
    \\
     \begin{subfigure}[t]{0.6\textwidth}
         \centering
         \includegraphics[width=1\textwidth]{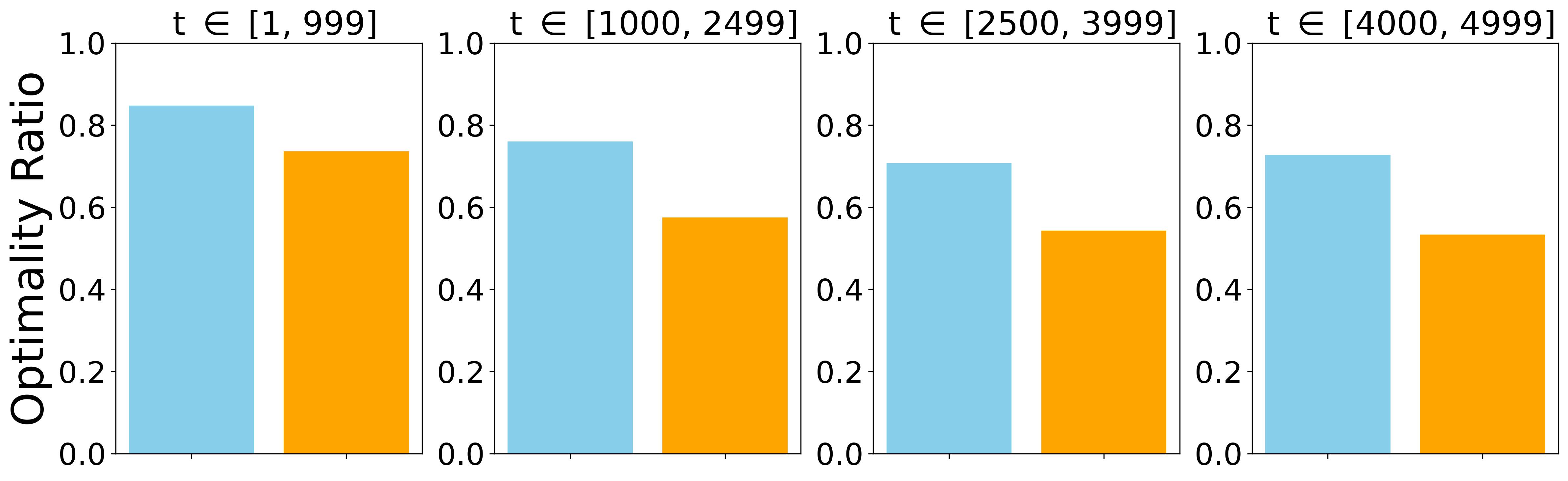}
         \label{subfig:optimality2}
     \end{subfigure}
    \\
     \begin{subfigure}[t]{0.6\textwidth}
         \centering
         \includegraphics[width=1\textwidth]{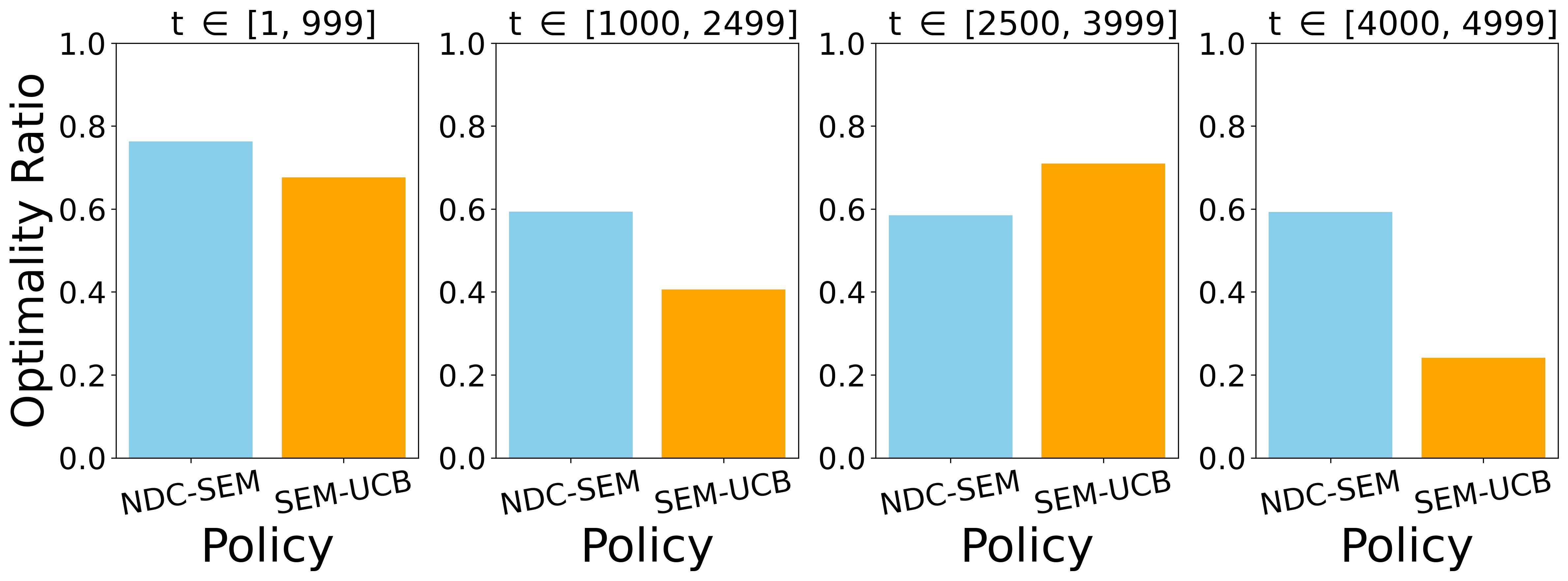}
         \label{subfig:optimality3}
     \end{subfigure}
    \caption{Optimality ratio of NDC-SEM vs. SEM-UCB for delay $D \in \{50, 200, 400\}$ from top to bottom.}
    \label{fig:Opt-Ratio}
\end{figure}

The tuned parameters of the NDC-SEM algorithm in our experiments are listed in \textbf{Table \ref{table:PolicyParams}}.
\renewcommand{\arraystretch}{1.1}
\renewcommand{\tabcolsep}{2mm}
\begin{table}[b!]
\caption{Parameters of the NDC-SEM policy in the experiments.}
\label{table:PolicyParams}
{
\begin{center}
\begin{tabular}{l | l l} 
    \hline
    Experiment & \multicolumn{2}{c}{Parameters} \\
    \hline
    Synthetic (D=50) & $\gamma = 0.985$ & $\xi = 1e-10$ \\
    \hline
    Synthetic (D=200) & $\gamma = 0.985$ & $\xi = 1e-06$ \\
    \hline
    Synthetic (D=400) & $\gamma = 0.985$ & $\xi = 1e-18$\\
    \hline
    Covid-19 & $\gamma = 0.85$ & $\xi = 0.1$ \\
    \hline
    \end{tabular}
    \end{center}
    }     
\end{table}

\subsection{Additional Information and Experiments regarding Covid-19 Dataset
}
\label{sec:AddInfo-covid19}
%
\subsubsection{Abbreviations of the Regions in Italy}
\label{App:NameAbbreviations}
\textbf{Table \ref{tab:AbbList}} lists the original names of the $21$ regions in Italy that we consider in our numerical experiments, together with the corresponding abbreviations.
\renewcommand{\arraystretch}{1.1}
\renewcommand{\tabcolsep}{1.2mm}
\begin{table}[t!]
\centering
\caption{List of regions in Italy and the corresponding abbreviations. \vspace{2mm}}
\label{tab:AbbList}
\scalebox{0.8}{
\begin{tabular}{|l|*{4}{c|}} \toprule
{Abbreviation}
& \makebox[7em]{Region Name}
\\ \midrule
ABR & Abruzzo \\\hline
BAS & Basilicata \\\hline
 CAL & Calabria \\\hline
 CAM & Campania \\\hline
 EMR & Emilia-Romagna \\\hline
 FVG & Friuli Venezia Giulia \\\hline
 LAZ & Lazio \\\hline
 LIG & Liguria \\\hline
 LOM  & Lombardia \\\hline
 MAR  & Marche \\\hline
 MOL & Molise \\\hline
 PAB  & Provincia Autonoma di Bolzano \\\hline
 PAT  & Provincia Autonoma di Trento \\\hline
 PIE  & Piemonte \\\hline
 PUG  & Puglia \\\hline
 SAR  & Sardegna / Sardigna \\\hline
 SIC  & Sicilia \\\hline
 TOS  & Toscana \\\hline
 UMB  & Umbria \\\hline
 VDA  & Valle d'Aosta / Vallée d'Aoste \\\hline
 VEN  & Veneto \\ \bottomrule
\end{tabular}
}
\end{table}

\subsubsection{Overall Daily New Cases of Covid-19 Infection}
%
Italy has been severely affected by the Covid-19 pandemic. In April $2020$, the country showed the highest death toll in Europe.  \textbf{Fig. \ref{fig:OrgCovidData}} depicts the overall daily new cases of $21$ regions in Italy for the considered time interval in our numerical experiments. This figure shows the original daily records before the pre-processing of the dataset in our experiment.
\begin{figure}[ht!]
  \centering
  \includegraphics[width=0.7\textwidth]{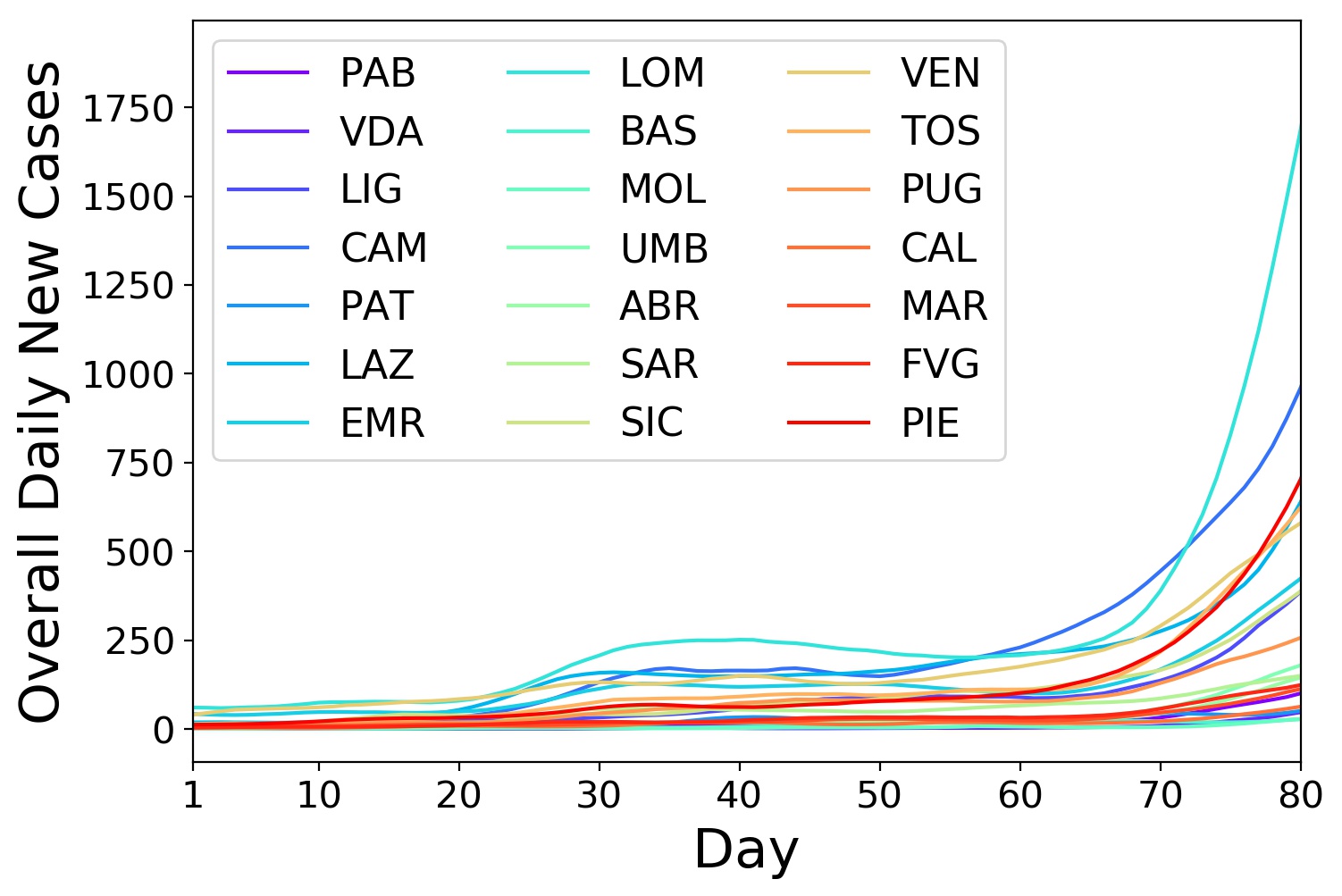}
  \caption{Overall daily new cases of Covid-19 for different regions in Italy during the study period.}
\label{fig:OrgCovidData}
\end{figure}

\subsubsection{Distribution of Region-Specific Daily New Cases}
\label{App:Region-SpecificDist}
The Covid-19 dataset includes only the region's overall daily new cases. Thus, to apply our algorithm, we need to infer the distribution of region-specific daily new cases for each region. To this end, we follow the approach proposed by \cite{Nourani22:LCS} and use the data corresponding to the period from $20$ April to $3$ June, $2020$, to estimate the underlying distributions of the region-specific daily new cases using a kernel density estimation. In particular, from $18$ May to $3$ June, all places for work and leisure activities were opened, and traveling within regions was permitted while traveling between regions was forbidden \cite{bull2021italian}. Consequently, during this period, there are no causal effects amongst the regions' overall daily new cases. In addition, according to google mobility data \cite{nouvellet2021reduction}, from $20$ April to $18$ May, the movement was increasing within the regions while a travel ban between the regions was still imposed. We sample from the aforementioned estimated distributions to create the region-specific daily cases for each region. Then, we apply a $7$-day moving average to the overall and region-specific cases.

\subsubsection{Expected Region-Specific Daily New Cases}
%
In \textbf{Fig. \ref{fig:MeanRewards-Covid}}, we show the trend of the regions' expected instantaneous reward over time in our experiment with the Covid-19 dataset. Note that this figure corresponds to the pre-processed Covid-19 data used in our experiment.
\begin{figure}[b!]
  \centering
  \includegraphics[width=0.65\textwidth]{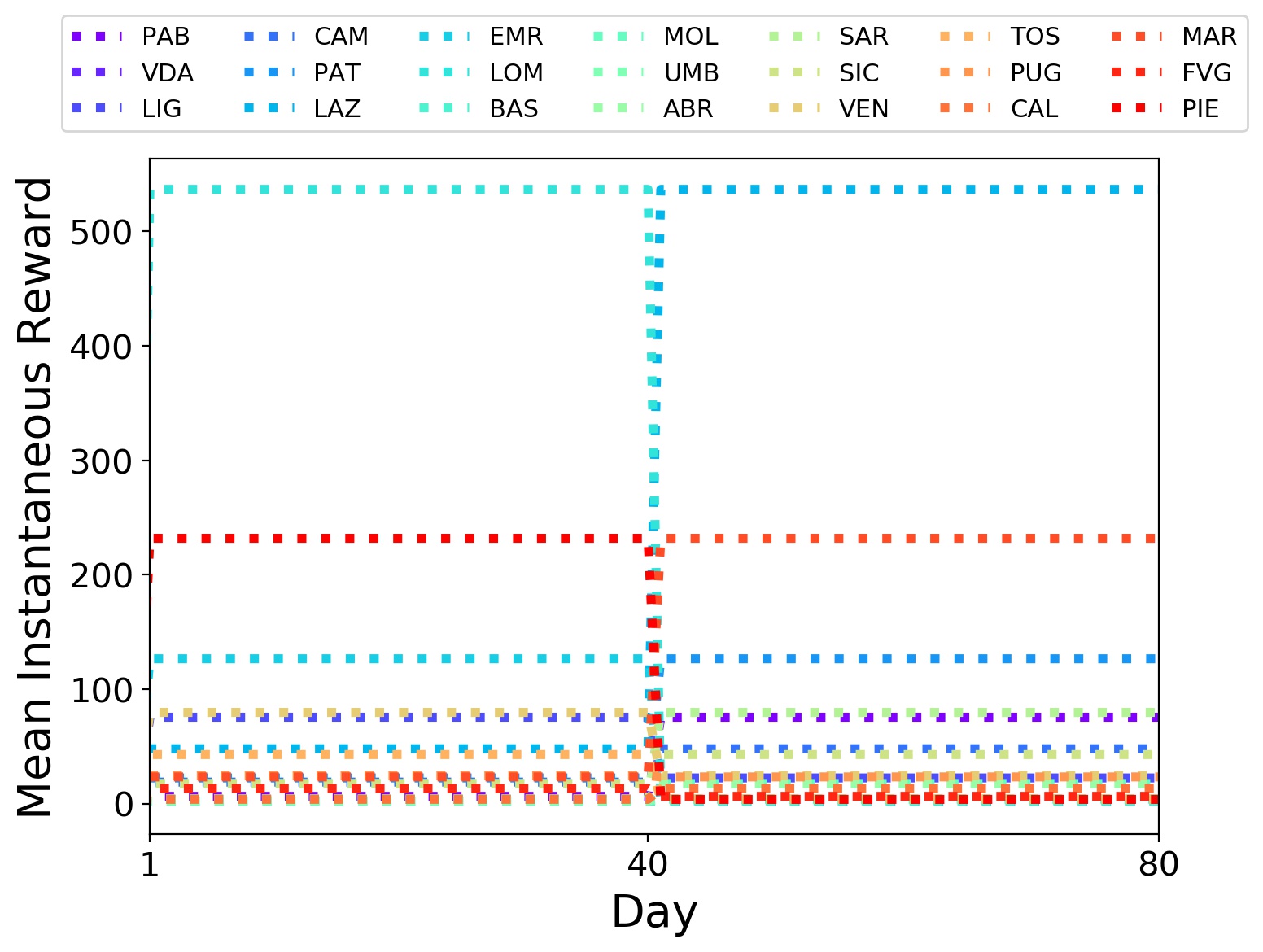}
  \caption{Evolution of the expected region-specific daily new cases for each region over time (corresponding to the pre-processed data).}
\label{fig:MeanRewards-Covid}
\end{figure}

\subsubsection{Hyperparameter Tuning for NDC-SEM Algorithm}
\label{App:HyperparameterTuning}
We simultaneously tuned $\gamma$ and $\xi$ by performing a grid search over the sets $\{0.5, 0.6, 0.65, 0.7, 0.75, 0.8, 0.85, 0.9, 0.95, 0.99\}$ and $\{1e-8, 3e-8, 7e-8, 1e-7, 3e-7, 7e-7, 1e-6, 3e-6, 7e-6, 1e-5, 3e-5, 7e-5, 1e-4, 3e-4, 7e-4, 1e-3, 3e-3, 7e-3, 1e-2, 7e-2, 1e-1, 7e-1\}$, respectively. To that end, we ran the algorithm with each pair of parameters and chose parameters that resulted in the highest estimated cumulative overall reward. To estimate the overall rewards, we used the final estimated adjacency matrix.

\subsubsection{Comparison of Cumulative Overall Reward}
\label{App:comparison}
\textbf{Fig. \ref{fig:EstCollectedOverallRew}} shows the trend of estimated cumulative overall reward for NDC-SEM and SEM-UCB algorithms. The estimated overall rewards are calculated using the final estimated adjacency matrix. As we see, after the change point at day $40$, NDC-SEM performs better than SEM-UCB due to the fact that it considers the effects of non-stationarity and the delay. Consequently, the selected regions by NDC-SEM yield higher accumulated overall rewards than those by SEM-UCB.
\begin{figure}[t!]
    \centering
    \includegraphics[width=0.6\textwidth]{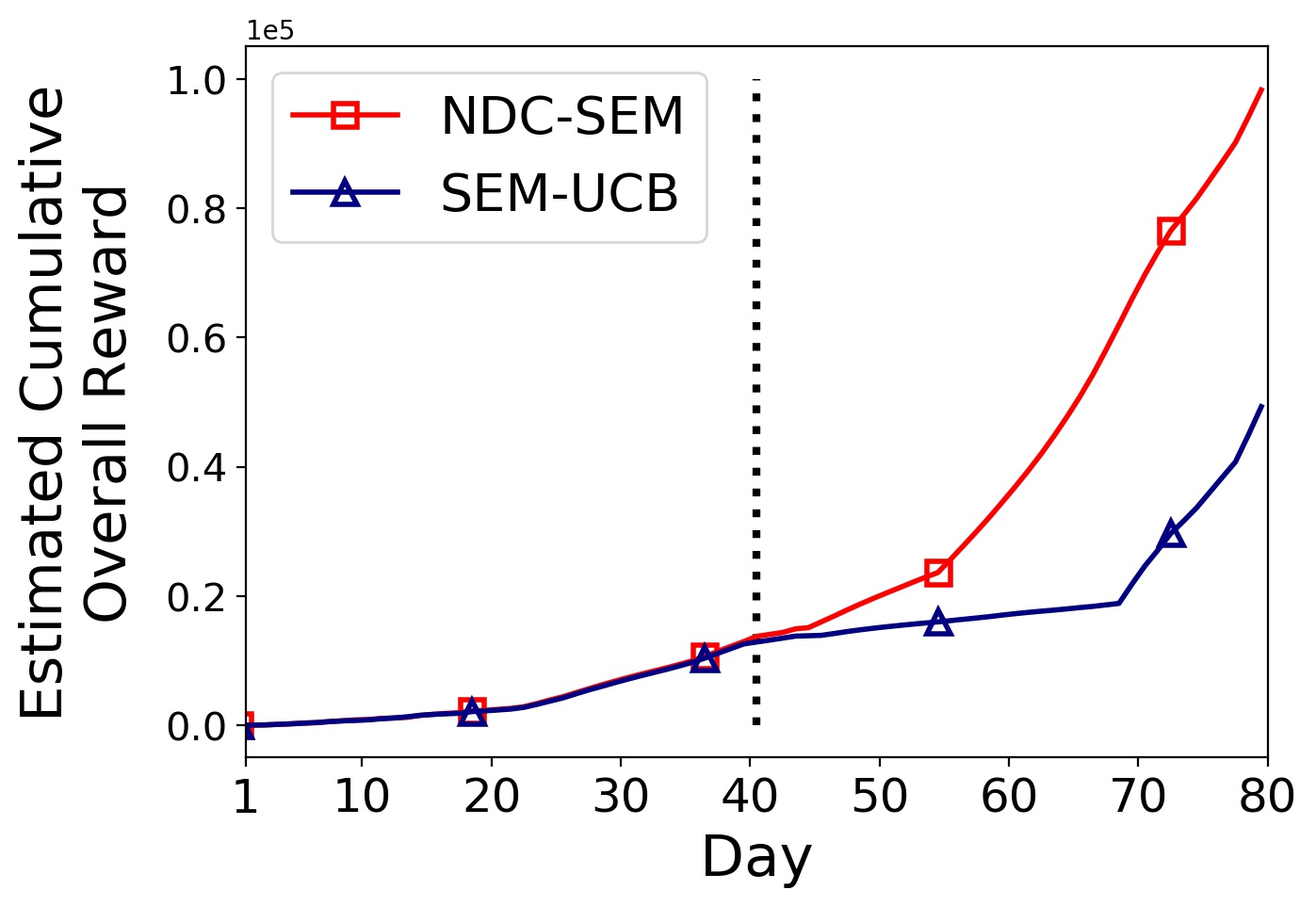}
    \caption{Trend of estimated cumulative overall reward. The vertical dotted line shows the change point at day $40$.} 
    \label{fig:EstCollectedOverallRew}
\end{figure}

%
\bibliographystyle{IEEEbib}
\bibliography{references}
\end{document}